\documentclass[dvipsnames]{article}

\usepackage{microtype}
\usepackage{graphicx}
\usepackage{subfigure}
\usepackage{booktabs} 

\usepackage{hyperref}

\usepackage[accepted]{icml2026}

\usepackage{url}

\usepackage{microtype}
\usepackage{graphicx}
\usepackage{subfigure}
\usepackage{booktabs} 
\usepackage{enumitem}
\usepackage{amsfonts}
\usepackage{amsmath}
\usepackage{amssymb}
\usepackage{mathtools}
\usepackage{amsthm}
\usepackage{multirow,makecell}
\PassOptionsToPackage{dvipsnames}{xcolor}
\usepackage{placeins}
\usepackage{bbm}
\usepackage{array}
\usepackage{stackrel}
\usepackage{colortbl}
\usepackage{tabu}
\usepackage{enumitem}
\usepackage{caption}
\usepackage{wrapfig}
\usepackage{fancybox}
\usepackage{fontawesome}
\usepackage{thm-restate}
\usepackage{cases}

\usepackage{xspace}
\newcommand\ie{i.\,e.\xspace}
\newcommand\eg{e.\,g.\xspace}

\newcommand{\ind}{\perp\!\!\!\perp} 
\newcommand{\iid}{\mathrel{\stackrel{\mathrm{i.i.d.}}{\sim}}}

\DeclareMathOperator*{\argmin}{arg\,min}
\newcommand*\diff{\mathop{}\!\mathrm{d}}

\newcommand{\for}{\text{for }}
\newcommand{\abs}[1]{\left\lvert #1 \right\rvert}

\newcommand{\norm}[1]{\left\lVert#1\right\rVert}

\usepackage{scalerel,stackengine,amsmath}

\usepackage{pifont}
\newcommand{\cmark}{\textcolor{ForestGreen}{\ding{51}}}%
\newcommand{\xmark}{\textcolor{BrickRed}{\ding{55}}}%
\newcommand{\imark}{\textcolor{Apricot}{\small\faWarning\,}}%

\newcolumntype{P}[1]{>{\centering\arraybackslash}p{#1}}

\newtheorem{definition}{Definition}

\newtheorem{prop}{Proposition}

\newtheorem{theorem}{Theorem}

\newtheorem{innercustomprop}{Proposition}
\newenvironment{numprop}[1]
  {\renewcommand\theinnercustomprop{#1}\innercustomprop}
  {\endinnercustomprop}

\newtheorem{innercustomtheorem}{Theorem}
\newenvironment{numtheorem}[1]
  {\renewcommand\theinnercustomtheorem{#1}\innercustomtheorem}
  {\endinnercustomtheorem}

\usepackage{times,tcolorbox}

\usepackage{pifont}
\usepackage{tikz}
\usepackage{graphicx}

\newcommand*\circled[1]{%
\tikz[baseline=(char.base)]{
  \node[shape=circle, draw=NavyBlue!60, fill=NavyBlue!10, thick, inner sep=1pt] (char) {\scriptsize\textsf{#1}};
}}

\newtcbox{\myovalbox}{colback=yellow,boxrule=0.1pt,arc=3pt,
  boxsep=0pt,left=0.5pt,right=0.5pt,top=0.5pt,bottom=0.5pt,}

\newlength\myindent
\newcommand{\MPOSPC}{\emph{MP-OSPC}\xspace}

\icmltitlerunning{Frequentist Consistency of Prior-Data Fitted Networks for Causal Inference}

\begin{document}

\twocolumn[
\icmltitle{Frequentist Consistency of Prior-Data Fitted Networks for Causal Inference}



\icmlsetsymbol{equal}{*}

\begin{icmlauthorlist}
    \icmlauthor{Valentyn Melnychuk}{lmu}
    \icmlauthor{Vahid Balazadeh}{uoft}
    \icmlauthor{Stefan Feuerriegel}{lmu}
    \icmlauthor{Rahul G. Krishnan}{uoft}
\end{icmlauthorlist}

\icmlaffiliation{lmu}{LMU Munich \& Munich Center for Machine Learning (MCML), Munich, Germany}
\icmlaffiliation{uoft}{University of Toronto \& Vector Institute, Toronto, Canada}

\icmlcorrespondingauthor{Valentyn Melnychuk}{melnychuk@lmu.de}

\icmlkeywords{causal inference, treatment effects estimation, prior-data fitted networks, Bayesian inference, average treatment effect}

\vskip 0.3in
]



\printAffiliationsAndNotice{}  

\begin{abstract} 
    Foundation models based on prior-data fitted networks (PFNs) have shown strong empirical performance in causal inference by framing the task as an in-context learning problem. However, it is unclear whether PFN-based causal estimators provide uncertainty quantification that is consistent with classical frequentist estimators. In this work, we address this gap by analyzing the frequentist consistency of PFN-based estimators for the average treatment effect (ATE). (1)~We show that existing PFNs, when interpreted as Bayesian ATE estimators, can exhibit prior-induced confounding bias: the prior is not asymptotically overwritten by data, which, in turn, prevents frequentist consistency. (2)~As a remedy, we suggest employing a calibration procedure based on a one-step posterior correction (OSPC). We show that the OSPC helps to restore frequentist consistency and can yield a semi-parametric Bernstein--von Mises theorem for calibrated PFNs (i.e., both the calibrated PFN-based estimators and the classical semi-parametric efficient estimators converge in distribution with growing data size). (3)~Finally, we implement OSPC through tailoring martingale posteriors on top of the PFNs. In this way, we are able to recover functional nuisance posteriors from PFNs, required by the OSPC. In multiple  (semi-)synthetic experiments, PFNs calibrated with our martingale posterior OSPC produce ATE uncertainty that (i)~asymptotically matches frequentist uncertainty and (ii)~is well calibrated in finite samples in comparison to other Bayesian ATE estimators.
\end{abstract}

\vspace{-0.1cm}
\section{Introduction} 
\vspace{-0.1cm}
Estimating the effect of treatments from observational data is widely relevant for decision-making in marketing \citep{langen2023causal}, public policy \citep{lechner2023causal,kuzmanovic2024causal}, and medicine \citep{feuerriegel2024causal}. A key target estimand here is the \emph{average treatment effect (ATE)}, which quantifies the population-level causal effect of an intervention.  

\begin{table*}[tbp]
    \vspace{-0.1cm}
    \caption{Overview of prior-data fitted networks (PFNs) that are suitable for ATE estimation.}
    \label{tab:methods-comparison}
    \begin{center}
        \vspace{-0.4cm}
        \scalebox{0.81}{
            \begin{tabu}{l|c|c|c|c|c|c|c|c|c|c|c}
                \toprule
                \multirow{2}{*}{PFN} & \multirow{2}{*}{POF} & \multirow{2}{*}{\circled{1} PICB} & \multicolumn{4}{c|}{Posterior distribution, $\mathbb{P}( \diamond \mid \mathcal{D}, x)$} & \multicolumn{3}{c|}{Posterior means } & \multirow{2}{*}{\begin{tabular}{c} \circled{2} Viable for\\ the OSPC \end{tabular}} & \multirow{2}{*}{\begin{tabular}{c} \circled{3} Viable \\for  MPs \end{tabular}}  \\
                \cmidrule{4-10} &&& $Y[a]$ & $Y[1] - Y[0]$ & $\mu_a$ & $A$ & $\mu_a$ & $\pi$ &$\tau$ &   & \\
                \midrule
                TabPFN \citep{hollmann2023tabpfn} & \cmark$^\dagger\!\!$& \imark & \cmark & \xmark & $\,$ \cmark $^{*}\!$ & \cmark & \cmark & \cmark & \cmark & \cmark & \cmark \\
                \midrule
                Do-PFN \citep{robertson2025pfn} & \xmark & \imark$\!^{\ddagger}\!$ & \cmark & \xmark & $\,$ \xmark$^{\ddagger}$ & \xmark & \cmark & \xmark & \cmark  & \xmark & $\,$ \xmark$^{\ddagger}$ \\
                CausalPFN \citep{balazadeh2025causalpfn} & \cmark & \imark & \xmark & \xmark & \cmark & \xmark & \cmark  & \xmark & \cmark & \xmark & $\,\,\,\,$\xmark$^{\times}$ \\
                CausalFM \citep{ma2026foundation} & \cmark & \imark & \xmark & \cmark &\xmark & \xmark & \xmark & \xmark & \cmark  & \xmark & $\,\,\,\,$\xmark$^{\times}$ \\
                \bottomrule
                \multicolumn{12}{p{20cm}}{POF: potential outcomes framework; PICB: prior-induced confounding bias; MPs: martingale posteriors; OSPC: one-step posterior correction} \\
                \multicolumn{12}{p{20cm}}{$^\dagger$: when used as a S- or T-learner; $^{\ddagger}$: struggles with non-identifiability; $^{*}$: can be recovered through MPs; $^{\times}$: do not provide a full PPD.}
            \end{tabu}
        }
    \end{center}
    \vspace{-0.6cm}
\end{table*}

Foundation models based on prior-data fitted networks (PFNs) have recently shown strong empirical performance in causal inference. Under their paradigm, causal inference is framed as an in-context learning problem \citep{brown2020language, dong2024survey}: the estimator/predictor is simply provided by a single forward pass through a large pretrained network (\eg, a transformer) that takes the whole observational dataset as an input. 

At a high level, PFNs are trained exclusively on synthetic datasets sampled from a prior over data-generating processes, which thus builds upon ideas of amortized variational Bayesian methods \citep{garnelo2018conditional,kim2019attentive,xie2022an}. A prominent example is TabPFN \citep{hollmann2023tabpfn,hollmann2025accurate} that is tailored to tabular data \citep{erickson2026tabarena}. Recently, several works adapted the PFNs specifically to causal inference \citep{robertson2025pfn,balazadeh2025causalpfn,ma2026foundation} (see Table~\ref{tab:methods-comparison}).

However, it is unclear whether existing PFNs for causal inference provide reliable uncertainty quantification. Notably, PFNs are approximate Bayesian models, which offer total uncertainty quantification ``out-of-the-box'' in the form of the \emph{posterior predictive density (PPD)}. Several works have explored how the PPDs can be used for downstream tasks \citep{jesson2025can,nagler2025uncertainty}. This feature makes PFNs attractive for causal inference \citep{balazadeh2025causalpfn,ma2026foundation}, since PPDs can yield the uncertainty for both the outcome model and propensity model (i.e., the nuisance functions). Yet, to the best of our knowledge, no work has so far studied PFNs for causal inference in terms of their \emph{frequentist consistency} (\ie, whether PFN-based estimators asymptotically converge to classical semi-parametric frequentist estimators). Our paper aims to fill this gap.

Here, we \textit{study the frequentist consistency of PFNs for ATE estimation}. While we focus on the ATE, our results naturally extend to other finite-dimensional causal estimands. Overall, our paper makes the following contributions:

\circled{1} ($\to$ Sec.~\ref{sec:picb}): First, we show that the existing PFNs \citep{hollmann2023tabpfn,hollmann2025accurate,balazadeh2025causalpfn,ma2026foundation}, when used as Bayesian ATE estimators, are prone to a so-called \emph{prior-induced confounding bias}.\footnote{This bias is also referred to as regularization-induced confounding bias and is known to affect Bayesian non-parametric models \citep{hahn2018regularization,linero2023and}.} Informally, this bias arises because the PFN’s implicit prior (which is learned from synthetic training distributions) can systematically shrink the degree of observed confounding toward zero. As a result, the PFN’s posterior is concentrated on nearly unconfounded data-generating processes, so the ATE estimates can be biased even as the sample size grows, which thus prevents the frequentist consistency.

\circled{2} ($\to$ Sec.~\ref{sec:ospc}): To address the prior-induced confounding bias in PFNs, we employ a novel calibration procedure based on the efficient influence functions, namely a \emph{one-step posterior correction (OSPC)} \citep{yiu2025semiparametric}. The OSPC allows us to recalibrate the PFN's uncertainty without full re-training and, under mild assumptions, restores frequentist consistency. As our main contribution, we show theoretically that a \emph{semi-parametric Bernstein--von Mises (BvM) theorem} \citep{ray2020semiparametric} holds approximately for the calibrated PFNs. This has the following implications: the OSPC of the PFNs (i)~yields ATE posteriors that asymptotically match the normal distribution given by a frequentist augmented inverse probability of treatment weighted (A-IPTW) estimators, and thereby (ii)~restores frequentist consistency. 

\circled{3} ($\to$ Sec.~\ref{sec:mp-ospc}): Finally, we implement our OSPC by adapting the general framework of martingale posteriors (MPs) \citep{fong2023martingale} to our setting, which we call \textbf{\MPOSPC} hereafter. This step is necessary because the OSPC requires not just the PPDs provided by the PFNs but \emph{the ability to sample posteriors over the nuisance functions}. Fortunately, MPs make it possible to recover posteriors for any parameter/function of the underlying data-generating process, as long as the PPD can be sequentially updated (that is, it satisfies a martingale property). To implement our \MPOSPC, we tailor a copula-based MP framework \citep{nagler2025uncertainty} to our causal setting. By calibrating the uncertainty of the PFNs in a manner targeted specifically at the ATE, our \MPOSPC combines the best of both worlds: (i)~asymptotic consistency of frequentist estimators and (ii)~finite-sample uncertainty guided by the prior of Bayesian estimators. Empirically, we show that existing PFNs combined with our \MPOSPC (i)~asymptotically match the well-established A-IPTW estimators; and (ii)~often achieve better finite-sample calibration compared to other Bayesian ATE estimators.

In sum, our paper is \textbf{novel} in three ways:\footnote{Code is available at {\url{https://github.com/Valentyn1997/freq-cons-pfns}}.} \textbf{(1)}~We show that na\"ive PFN-based Bayesian ATE estimators can be systematically biased due to an overly strong implicit prior that is not overwritten by the observed data. \textbf{(2)}~We develop a novel calibration procedure based on a one-step posterior correction and martingale posteriors, which we call \MPOSPC. \textbf{(3)}~We show empirically that existing PFNs with our \MPOSPC on top achieve state-of-the-art uncertainty quantification when compared to standard PFN-based baselines.

\vspace{-0.1cm}
\section{Related Work} \label{sec:related-work}
\vspace{-0.1cm}

\subsection{Causal Inference}
\vspace{-0.1cm}

\textbf{Frequentist estimators.} A broad family of estimators has been developed for causal inference from observational data, including plug-in and inverse probability treatment weighted (IPTW) estimators, which all require the estimation of nuisance functions \citep{kennedy2024semiparametric}. Among these, an A-IPTW estimator is particularly attractive: it yields semi-parametric $\sqrt{n}$-consistent, efficient, and asymptotically normal inference for finite-dimensional targets (such as ATE) under standard regularity conditions \citep{robins2000robust,newey1994asymptotic}; and it is, in some sense, optimal \citep{jin2025structure}. Conceptually, the A-IPTW estimator can be viewed as a first-order bias correction of a plug-in estimator using the efficient influence function \citep{bickel1993efficient,tsiatis2006semiparametric}. The A-IPTW and other debiased (but asymptotically equivalent) estimators \citep{van2006targeted} have recently been combined with various ML models  \citep[e.g.,][]{shi2019adapting,hatt2021estimating,chernozhukov2022riesznet,hines2025automatic}.

\textbf{Bayesian estimators.} Bayesian approaches to causal inference typically model the nuisance functions using flexible nonparametric methods, such as Gaussian processes \citep{alaa2017bayesian,alaa2018limits} and Bayesian regression trees \citep{chipman2010bart}. For broader overviews, we refer to \citet{li2023bayesian,linero2023and}. However, unlike frequentist estimators, Bayesian methods do \textit{not} have a notion of (semi-parametric) efficiency. Instead, Bayesian estimators can have a related property, namely, frequentist consistency, which is formalized through the Bernstein--von Mises (BvM) theorem \citep{hartigan1983asymptotic,bickel2012semiparametric}.

\textbf{Bernstein--von Mises (BvM) theorem.} The BvM theorem states that Bayesian credible intervals for a causal estimand asymptotically coincide with confidence intervals centered at efficient frequentist estimators, such as the A-IPTW \citep{ray2020semiparametric}. This property can be achieved in several ways. (a)~One approach is to adopt tailored ad-hoc parametrizations of Bayesian models; for example, by placing priors directly on the conditional average treatment effect \citep{chipman2010bart}, or by incorporating the propensity score as an input to the outcome model \citep{hahn2020bayesian}. (b)~A more general approach to obtaining the BvM property is by using one-step corrections based on efficient influence functions. Existing correction approaches for (b) include prior corrections \citep{ray2020semiparametric,ray2019debiased}, corrections to Bayesian variational objectives \citep{javurek2026generalized}, and posterior corrections (i.e., OSPC) \citep{yiu2025semiparametric}. We later employ posterior corrections, which can be implemented without re-training an amortized Bayesian model. 
Yet, to the best of our knowledge, the BvM property was not studied for PFNs \citep{muller2022transformers}, or, generally, for neural processes \citep{garnelo2018conditional}. 

\begin{figure*}[ht]
    \centering
    \vspace{-0.3cm}
    \includegraphics[width=0.27\linewidth]{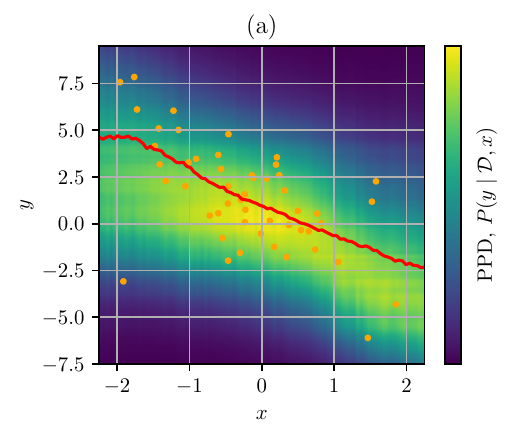} \hspace{-0.3cm}
    \includegraphics[width=0.25\linewidth]{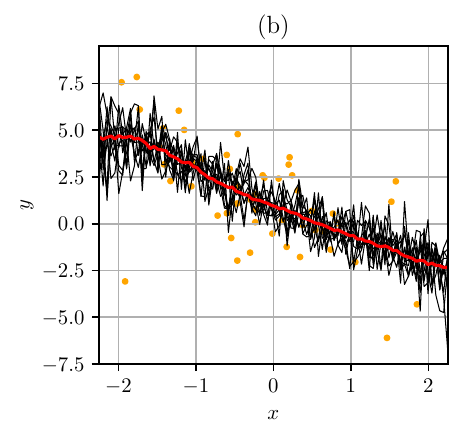} \hspace{-0.3cm}
    \includegraphics[width=0.25\linewidth]{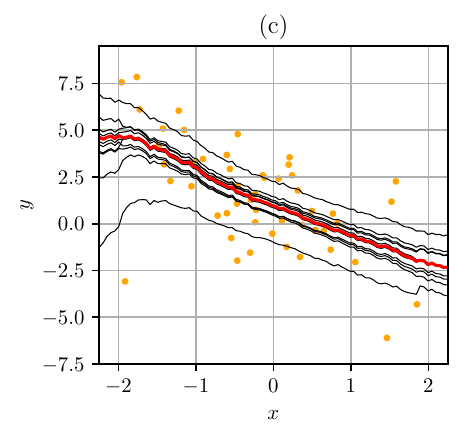} \hspace{-0.3cm}
    \includegraphics[width=0.25\linewidth]{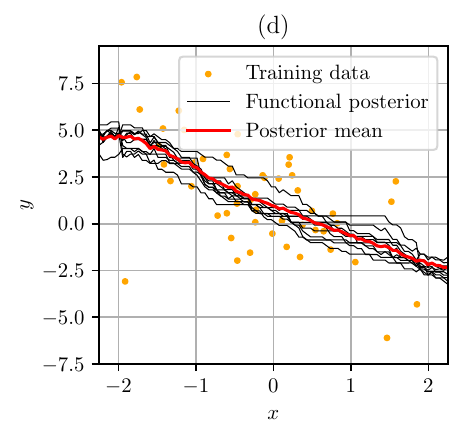}
    \vspace{-0.7cm}
    \caption{{Recovering functional posteriors from TabPFN.} In \textbf{(a)}, we show a PPD $P(y\mid \mathcal{D}, x)$,  where $\mathcal{D}=\{(x_i,y_i)\}_{i=1}^{50}$. Then, in \textbf{(b)}-\textbf{(d)}, we draw PPD samples from different functional posteriors $\tilde{\mu} \sim \Pi(\mu \mid \mathcal{D})$ (recovered with martingale posteriors), where $\mu(x) = \mathbb{E}(Y \mid x)$. Notably, the same PPD $P(y\mid \mathcal{D}, x)$ \textbf{(a)} can encompass different functional posteriors (in the causal setting, those, in turn, lead to different ATE posteriors): \textbf{(b)} $x$-independent posterior, \textbf{(c)} $x$-parallel posterior, \textbf{(d)} smooth posterior.}
    \vspace{-0.5cm}
    \label{fig:recovering-posteriors}
\end{figure*}

\vspace{-0.1cm}
\subsection{Prior-Data Fitted Networks (PFNs)}
\vspace{-0.1cm}

\textbf{PFNs.} PFNs were introduced as foundation models that amortize Bayesian inference by pretraining on synthetic datasets sampled from a user-specified prior over data-generating processes \citep{muller2022transformers}. A state-of-the-art example is TabPFN \citep{hollmann2023tabpfn,hollmann2025accurate}, which we use as our main backbone in our empirical analysis. In principle, any PFN such as TabPFN can be used for ATE estimation as an S- or T-learner \citep{kunzel2019metalearners}.

\textbf{PFNs for causal inference.} Recent work has extended the PFN paradigm to \emph{direct} causal effect estimation by simulating causal structures during pretraining. Notable examples include Do-PFN \citep{robertson2025pfn}, CausalPFN \citep{balazadeh2025causalpfn}, and CausalFM \citep{ma2026foundation}. These works differ (i)~in their underlying causal formulations (\eg, whether they adopt the potential outcomes framework); and (ii)~in what posterior predictive densities (PPDs) and, consequently, what nuisance functions they are able to model. We provide a detailed comparison in Table~\ref{tab:methods-comparison}. We later exclude Do-PFN from our analysis, as it overestimates uncertainty due to the fact that it relies on a \emph{non-identifiable} formulation \citep{ma2026foundation}.

\textbf{Uncertainty quantification with PFNs.} A key motivation for PFNs is that they can quantify an approximate total uncertainty of the outcome in the form of a PPD,\footnote{CausalPFN \citep{balazadeh2025causalpfn} is a slight exception, as it provides not a PPD but a posterior density of the outcome model.} which can then be used for downstream tasks. However, deriving the uncertainty of the downstream estimands (such as conditional means or quantiles) is more challenging and requires an additional layer of inference \citep{jesson2025can,nagler2025uncertainty}. In particular, one must recover functional posteriors using the framework of \emph{martingale posteriors (MPs)} \citep{fong2023martingale}. MPs can be constructed either solely based on PFNs \citep{ng2025tabmgp} or by combining a PFN with an additional copula-based model \citep{nagler2025uncertainty}. In our work, we adopt the latter approach to recover the nuisance functions posteriors and, methodologically, extend the marginal point-wise MPs of \citet{nagler2025uncertainty} to conditional MPs in order to obtain joint functional posteriors for the nuisance functions.

\textbf{Research gap.} To the best of our knowledge, no work has studied frequentist consistency of PFNs for causal inference or, more generally, semi-parametric estimation.

\vspace{-0.1cm}
\section{Preliminaries} \label{sec:prelim}
\vspace{-0.1cm}

\textbf{Notation.} We use uppercase letters $X,A,Y$ for random variables and lowercase $x,a,y$ for their realizations in $\mathcal{X},\mathcal{A},\mathcal{Y}$. For a generic random variable $Z$, $\mathbb{P}(Z)$ denotes its distribution; $\mathbb{E}(Z)$ is an expectation; and $P(z)$ refers to the corresponding density or probability mass (when needed, we also specify a random variable $Z$ in a lower index, \ie, $P_Z(z)$). Similarly, we define distributions in the functional space of the nuisance functions $\eta \in \mathcal{H}$. Namely, let ${\Pi}(\eta)$ denote a distribution over the nuisance functions space $\mathcal{H}$. Then, for a dataset $\mathcal{D} = \{z_i\}_{i=1}^n$, we denote the empirical distribution as $\mathbb{P}_n := n^{-1}\sum_{i=1}^n \delta_{z_i}$ where $\delta_z$ is a point mass distribution; and posterior distributions for random variables/functions as $\mathbb{P}(\cdot \mid \mathcal{D}) / \Pi(\cdot\mid \mathcal{D})$, respectively.
For a measurable function $f$, we write its $L_2$ norm as $\norm{f}=(\mathbb{E}|f(Z)|^2)^{1/2}$, and 
we denote the empirical mean by $\mathbb{P}_n\{ f(Z)\} := n^{-1}\sum_{i=1}^n f(z_i)$ and the mean wrt. Bayesian bootstrap process as $\operatorname{BB}_n\{f(Z)\}=\sum_{i=1}^n W_if(z_i)$ where $(W_1, \dots, W_n)$ is a sample from a Dirichlet process $\operatorname{Dir}(n; 1, \dots,1)$. The propensity score is $\pi(x)=\mathbb{P}(A=1\mid X=x)$, and we write the conditional outcome mean as $\mu_a( x)=\mathbb{E}(Y\mid X=x,A=a)$. When unambiguous, we use shorthand such as $\mathbb{E}(Y\mid x)$ and $\mathbb{P}(Y\mid x)$.

\textbf{Problem setup and causal estimand.} In our work, we rely on the potential outcomes framework \citep{rubin1974estimating}, where $Y[a]$ denotes the potential outcome under the intervention $\mathrm{do}(A=a)$. We have access to the observational i.i.d. dataset $\mathcal{D}=\{(x_i,a_i,y_i)\}_{i=1}^n \sim \mathbb{P}(X,A,Y) = \mathbb{P}(Z)$. The covariates $X\in\mathcal{X}\subseteq\mathbb{R}^{d_x}$ are measured before treatment and may be high-dimensional, the treatment $A\in\{0,1\}$ is binary, and the outcome $Y\in\mathcal{Y}\subseteq\mathbb{R}$ is real-valued. As a concrete example, in oncology studies, $Y$ could represent tumor size, $A$ whether radiotherapy was administered, and $X$ patient characteristics such as age and sex. Our target causal estimand is the average treatment effect (ATE) $\psi = \mathbb{E}(Y[1] - Y[0])$.

\textbf{Causal assumptions and identification.} To consistently estimate the ATE, we make standard causal assumptions \citep{rubin1974estimating}: (i)~\emph{consistency}: $Y[A] = Y$; (ii)~\emph{strong overlap}: $\mathbb{P}(\varepsilon < \pi(X) < 1 - \varepsilon) = 1$ for some $\varepsilon > 0$, and (iii)~\emph{unconfoundedness}: $(Y[0],Y[1]) \ind A \mid X$. Then, under the assumptions (i)--(iii), the ATE is identified as
\begingroup\makeatletter\def\f@size{9}\check@mathfonts
\begin{equation} \label{eq:ate-id}
    \psi = \psi (\eta) = \int_\mathcal{X} (\mu_1(x) - \mu_0(x)) \diff P_X(x),
\end{equation}
\endgroup
where $\eta = \{\mu_0, \mu_1, P_X\}$ are the ground-truth nuisance functions.

\textbf{Frequentist ATE estimation.} A na\"ive way to estimate the ATE is first (i)~to fit the nuisance functions $\hat{\mu}_0, \hat{\mu}_1$ with some regression model (\eg, a neural network) and $\hat{P}_X = \mathbb{P}_n$ and then (ii)~to plug them in into the identification formula from Eq.~\eqref{eq:ate-id}. This yields a \emph{plug-in estimator}: $\psi^{\text{PI}} (\hat{\eta}) = \mathbb{P}_n\{\hat{\mu}_1(X) - \hat{\mu}_0(X)\}$. Yet, this estimator suffers from a plug-in bias \citep{kennedy2024semiparametric}, which can be mitigated by the following bias-correction procedure:
\begingroup\makeatletter\def\f@size{9}\check@mathfonts
\begin{equation} \label{eq:aiptw}
    \psi^{\text{A-IPTW}} (\hat{\eta}) = \psi^{\text{PI}} (\hat{\eta}) + \mathbb{P}_n \{ \phi_\psi(Z; \hat{\eta})\},
\end{equation}
\endgroup
where $\psi^{\text{A-IPTW}}$ is called an \emph{augmented inverse probability of treatment weighted (A-IPTW) estimator}, and $\phi_\psi(Z; \hat{\eta})$ is an efficient influence function given by
\begingroup\makeatletter\def\f@size{9}\check@mathfonts
\begin{align} \label{eq:ate-eif}
    & \phi_\psi(Z; \eta=(\mu_0, \mu_1, \pi, P_X))  \\ 
    & = \frac{A - \pi(X)}{\pi(X) (1 - \pi(X))} \big( Y - \mu_A(X) \big) + \mu_1(X) - \mu_0(X) - \psi (\eta). \nonumber
\end{align}
\endgroup
The core property of the A-IPTW estimator $\hat{\psi} = \psi^{\text{A-IPTW}}(\hat{\eta})$ is that, under mild assumptions on the convergence of the nuisance functions (namely $R_2 =\sqrt{n}\norm{\hat{\mu}_a - {\mu}_a} \norm{\hat{\pi}^{-1} - {\pi}^{-1}} \to 0$ for $a \in \{0, 1\}$) and proper sample splitting, it serves as a semi-parametric $\sqrt{n}$-consistent and efficient estimator of the ATE with the asymptotically normal distribution \citep{robins2000robust,newey1994asymptotic}:
\begingroup\makeatletter\def\f@size{9}\check@mathfonts
\begin{equation} \label{eq:as-var}
    \sqrt{n}(\hat{\psi} - \psi) \xrightarrow{d} N\big(0; \operatorname{Var}[\phi_\psi(Z; \eta)] \big).
\end{equation}
\endgroup

\textbf{Bayesian ATE estimation.} Bayesian semi-parametric estimators for ATE are conceptually similar to frequentist ones \citep{yiu2025semiparametric} in that they also proceed in two stages. First, they assume some prior distribution for the nuisance functions $\eta \sim \Pi(\eta)$ (\eg, Gaussian processes for $\mu_a$ and an uninformative Dirichlet process prior for $P_X$). Then, they infer the posteriors $\tilde{\eta} \sim \Pi(\eta \mid \mathcal{D})$ (\eg, posterior Gaussian processes for $\mu_a$ and Bayesian bootstrap for $P_X$) and push-forward the latter through the mean functional from Eq.~\eqref{eq:ate-id}:
\begingroup\makeatletter\def\f@size{9}\check@mathfonts
\begin{equation} \label{eq:bayes-plug-in}
    {\psi}^\text{PI}(\tilde{\eta}) \mid \mathcal{D} = \operatorname{BB}_n\{\tilde{\mu}_1(X) - \tilde{\mu}_0(X)\}; \quad \tilde{\mu}_a \sim \Pi(\mu_a \mid \mathcal{D}),
\end{equation}
\endgroup
where posterior sampling from $\Pi(\mu_a \mid \mathcal{D})$ and Bayesian bootstrap are done independently (where we assume w.l.o.g. that the size of $\mathcal{D}$ and Bayesian bootstrap sample are both $n$). The sampling procedure from Eq.~\eqref{eq:bayes-plug-in} then induces a posterior distribution of the ATE, $\mathbb{P}({\psi}^\text{PI}(\tilde{\eta}) \mid \mathcal{D})$, which we call a \emph{plug-in posterior}. 

\textbf{Frequentist consistency.} In the later sections, we are interested in the frequentist consistency of Bayesian ATE estimators. This property is desired, as it ensures asymptotically that (1) the result is indifferent to prior specification and (2) agreement between Bayesian and frequentist estimators. This can be formalized with the following property \citep{ray2020semiparametric,yiu2025semiparametric}.

\begin{definition}[Semi-parametric Bernstein--von Mises (BvM) theorem] \label{def:bvm}
    A Bayesian estimator ${\psi}(\tilde{\eta}) \mid \mathcal{D}$ satisfies a semi-parametric BvM theorem if the posterior distribution of $\sqrt{n}(\psi(\tilde{\eta}) - \psi)$ converges in total variation (TV) to the asymptotic normal distribution of the A-IPTW estimator:
    \begingroup\makeatletter\def\f@size{9}\check@mathfonts
    \begin{equation}
        d_\text{TV}\Big[\mathbb{P}(\sqrt{n}(\psi(\tilde{\eta}) - \psi) \mid \mathcal{D}); N\big(0; \operatorname{Var}[\phi_\psi(Z; \eta)] \big) \Big] \to 0.
    \end{equation}
    \endgroup
\end{definition}

Notably, as we will demonstrate later, the plug-in posteriors generally do \textbf{not} satisfy the BvM theorem.

\vspace{-0.1cm}
\section{How to Turn PFNs into ATE Estimators?} \label{sec:pfn-based-ate}
\vspace{-0.1cm}

To study the frequentist consistency of PFNs when used as \textit{Bayesian} ATE estimators, we must first specify how PFNs can be turned into ATE estimators. Unlike classical semi-parametric Bayesian models, PFNs do not define explicit posteriors over nuisance functions. Instead, they provide only pointwise posterior predictive densities (PPDs), ${P}(\diamond \mid \mathcal{D}, x)$, for different components of the data-generating process (see Table~\ref{tab:methods-comparison} and Appendix~\ref{app:background-pfns} for details). This seemingly minor distinction turns out to be crucial: depending on how PPDs are used, PFNs can behave either as (a) standard frequentist estimators or as (b) Bayesian estimators with fundamentally different uncertainty properties.

\textbf{(a)~PFNs as frequentist estimators.} As a baseline, PFNs can be used in a purely frequentist manner by extracting point estimates from their posterior predictive densities.
For example, given the conditional posterior mean estimator $\hat{\mu}_a(x) = \int_{\mathcal{Y}} y \diff{P}(y \mid \mathcal{D}, x, a)$ of TabPFN \citep{hollmann2023tabpfn} or $\hat{\mu}_a(x) = \int_{\mathcal{Y}} \mu_a \diff{P}(\mu_a \mid \mathcal{D}, x)$ of CausalPFN \citep{balazadeh2025causalpfn}, either posterior mean estimator can be used to construct a plug-in ATE estimator. Similarly, we can further infer the posterior mean for the propensity score $\hat{\pi}(x) = {P}(A=1 \mid \mathcal{D}, x)$ with TabPFN and use it inside the A-IPTW estimator (see Eq.~\eqref{eq:aiptw}). Yet, PFNs are inherently Bayesian methods, and we thus prefer fully Bayesian estimators as shown in (b).

\textbf{(b)~PFNs as Bayesian estimators.} A na\"ive way to use PFNs as Bayesian ATE estimators is to sample point-wise from the PPDs \emph{as if they were functional posteriors}. For example, we might sample, independently for each $x$, the point-wise conditional outcome mean posterior from CausalPFN \citep{balazadeh2025causalpfn} as $\tilde{\mu}_a(x) \sim \mathbb{P}(\mu_a \mid \mathcal{D}, x)$ and push it forward with Eq.~\eqref{eq:bayes-plug-in}.  We call this ATE posterior an \emph{$x$-independent PPD plug-in posterior}. Alternatively, we might sort the sampled $\tilde{\mu}_a(x)$ point-wise wrt. $\mathcal{Y}$ (\eg, ascending) and yield an \emph{$x$-parallel PPD plug-in posterior}. 
Importantly, there is not a unique way to recover the nuisance function posteriors $\Pi(\eta \mid \mathcal{D})$ from the PPDs (see visual examples in Fig.~\ref{fig:recovering-posteriors}). Still, both the independent and parallel posteriors misrepresent the uncertainty of the nuisance functions: the independent posterior fully discards a potential smooth structure of the nuisance functions, and the parallel posterior induces a non-existent dependency. Later, we will show that it is possible to recover smooth, ``natural'' posteriors of nuisance functions from PFNs using an MP framework.

Still, a more fundamental issue arises when PFNs are used with plug-in posteriors: \emph{prior-induced confounding bias}, which we discuss next.

\vspace{-0.1cm}
\section{Frequentist Consistency of PFNs}
\vspace{-0.1cm}

In the following, we discuss \circled{1} what is a prior-induced confounding bias, \circled{2} how to correct it and achieve frequentist consistency, and \circled{3} how to recover smooth posteriors when we want to employ PFNs as \emph{Bayesian ATE estimators}. Towards the end, we introduce our main framework of martingale posteriors-based one-step posterior corrections (\MPOSPC).

\begin{figure*}[th]
    \centering
    \vspace{-0.3cm}
    \includegraphics[width=0.33\linewidth]{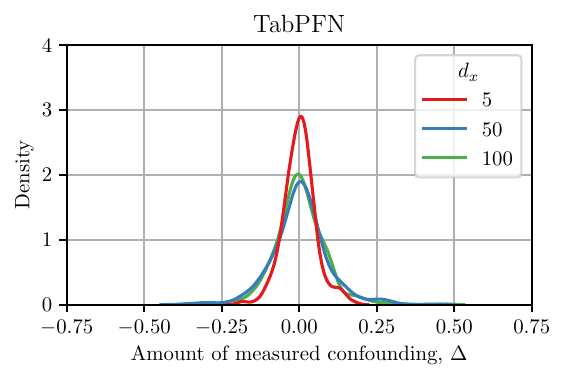} 
    \includegraphics[width=0.33\linewidth]{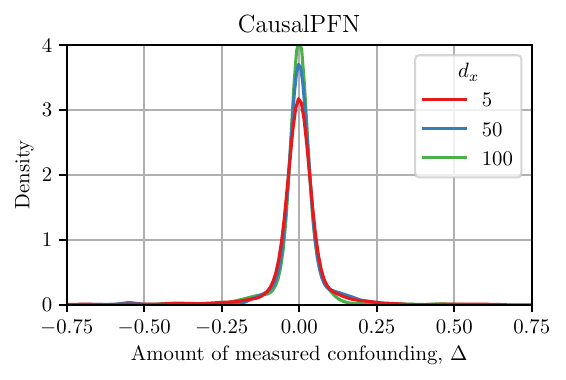}
    \includegraphics[width=0.33\linewidth]{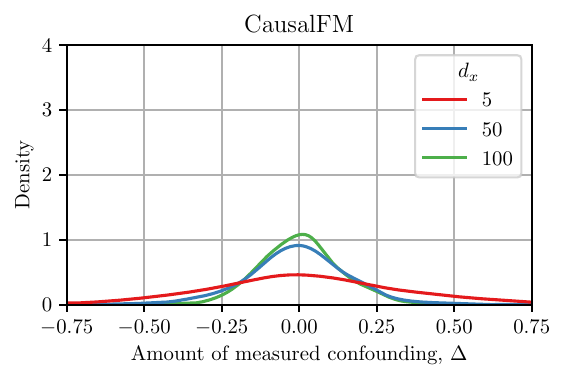}
    \vspace{-0.6cm}
    \caption{Prior-induced confounding bias of different PFNs. High values of $\Delta$ correspond to a high degree of the observed confounding. Thus, when $\Delta$ is concentrated around zero, the prior might induce the confounding bias which does not vanish asymptotically / vanishes very slowly with growing data. Here, we sample $B = 512$ causal datasets with $n=10000$ each. $\Delta$ was computed from the prior causal datasets used to train PFNs: for CausalPFN/CausalFM, these are directly available in code or from the authors; for TabPFN, we additionally sampled treatment assignments in the same spirit as its synthetic classification data generation.} 
    \vspace{-0.5cm}
    \label{fig:prior-bias}
\end{figure*}

\vspace{-0.1cm}
\subsection[Prior-Induced Confounding Bias]{\circled{1}~Prior-Induced Confounding Bias} \label{sec:picb}
\vspace{-0.1cm}

The plug-in ATE posterior, in general, faces a problem of a \emph{prior-induced confounding bias (PICB)} \citep{hahn2018regularization,linero2023and,ray2020semiparametric}. For example, when Bayesian non-parametric models (\eg, Gaussian processes (GPs)) are used to define the data-generating process for $\mathcal{D}$ (namely,  $\mu_a$ and $\pi$), their prior rarely describes datasets with a high degree of \emph{observed confounding} and, thus, \emph{biases} the plug-in ATE posterior. The intuitive explanation of this phenomenon is that 
as the nuisance functions $\mu_a$ and $\pi$ are sampled from the prior, they rarely depend on the same subsets of covariates $x^J, J \subseteq \{1, \dots, d_x\}$. 

More generally, the PICB can break the consistency\footnote{Importantly, here we distinguish two types of consistency: (i)~a \emph{(general) consistency} and (ii)~a \emph{frequentist consistency}.} of any Bayesian estimator as it induces either (i)~an asymptotically \emph{non-vanishing}, or (ii)~\emph{slower-than-$\sqrt{n}$ vanishing bias} in the ATE posterior. (i)~The first situation can happen given a very general, non-smooth functional prior (\eg, Dirichlet), such that the likelihood does not dominate over the prior \citep{diaconis1986consistency}. On the other hand, (ii)~the second scenario is typical for smooth non-parametric models (\eg, GPs) \citep{ray2020semiparametric}. Specifically, while the general consistency holds for these models, it is impossible for any regular Bayesian estimator to concentrate at a rate faster than $1/\sqrt{n}$ \citep{van2000asymptotic}.

Interestingly, existing PFNs are also affected by the PICB as they are trained on synthetic datasets sampled from a prior distribution. To show this, we sample the prior causal datasets, $\mathcal{D}_c = \{(x_i, a_i, y_i[0], y_i[1])\}_{i=1}^{n}$, and evaluate the degree of the observed confounding:
\begingroup\makeatletter\def\f@size{9}\check@mathfonts
\begin{equation}
    \Delta = \mathbb{E}(Y[1] - Y[0]) - \big(\mathbb{E}(Y \mid A = 1) - \mathbb{E}(Y \mid A = 0) \big),
\end{equation}
\endgroup
which quantifies how the ATE differs from the difference in means (here, $Y$ is considered to be standard normalized). Fig.~\ref{fig:prior-bias} illustrates the prior-induced confounding bias for different PFNs.
We observed that, although the amount of the observed confounding $\Delta$ remains relatively fixed with increasing covariate dimensionality $d_x$ (unlike for GPs), $\Delta$ is still highly concentrated around zero for the PFNs. 

Notably, the exact amount of the PICB in the ATE posterior is intractable for PFNs (given that both the prior and the variational approximation are complex neural networks). Yet, while (i)~the general consistency can be achieved by PFN-based plug-in estimators with sufficiently expressive priors \citep{balazadeh2025causalpfn} so that the degree of observed confounding  $\Delta$ has a wide enough support; (ii)~the frequentist consistency still depends on the concentration of the prior distribution of $\Delta$.

\textit{Takeaway:} The PICB is, therefore, a \emph{main obstacle} for the frequentist consistency of the PFNs. That is, we cannot, in general, guarantee the BvM theorem as the prior has too much influence on the plug-in ATE posterior.

\vspace{-0.1cm}
\subsection[One-step Posterior Corrections]{\circled{2}~One-step Posterior Corrections} \label{sec:ospc}
\vspace{-0.1cm}

To circumvent the prior-induced confounding bias, we suggest employing a \emph{one-step posterior correction (OSPC)} for the ATE plug-in posterior \citep{yiu2025semiparametric} (this procedure is similar to the bias correction of the frequentist A-IPTW estimator). The OSPC defines the posterior for the ATE as the following push-forward of the posterior nuisance functions:
\begingroup\makeatletter\def\f@size{9}\check@mathfonts
\begin{equation}
    {\psi}^\text{OSPC}(\tilde{\eta}) \mid \mathcal{D} = {\psi}^\text{PI}(\tilde{\eta}) + \operatorname{BB}_n\{\phi_\psi(Z; \tilde{\eta}) \}; \tilde{\eta} \sim \Pi(\eta \mid \mathcal{D}),
\end{equation}
\endgroup
where ${\psi}^\text{PI}(\tilde{\eta})$ is the plug-in ATE push-forward from Eq.~\eqref{eq:bayes-plug-in}, and $\phi_\psi$ is the efficient influence function of the ATE, see Eq.~\eqref{eq:ate-eif}. The OSPC thus yields an \emph{OSPC ATE posterior} $\mathbb{P}({\psi}^\text{OSPC}(\tilde{\eta}) \mid \mathcal{D})$ given by the following push-forward:
\begingroup\makeatletter\def\f@size{9}\check@mathfonts
\begin{align} \label{eq:bayes-ospc}
    & {\psi}^\text{OSPC}(\tilde{\eta} ) \mid \mathcal{D} = \operatorname{BB}_n\big\{\xi_\psi(Z; \tilde{\eta}) \big\};\\
    &\xi_\psi(Z; \tilde{\eta}) \!=\! \frac{A - \tilde{\pi}(X)}{\tilde{\pi}(X) (1 - \tilde{\pi}(X))} \big( Y - \tilde{\mu}_A(X) \big)\! +\! \tilde{\mu}_1(X) \!-\! \tilde{\mu}_0(X); \nonumber \\
    & \tilde{\eta} \sim \Pi(\eta \mid \mathcal{D}),
\end{align}
\endgroup
where $\xi_\psi$ is an uncentered efficient influence function. 
Notably, the Bayesian ATE estimation now requires modeling the functional posterior of the propensity score $\Pi(\pi \mid \mathcal{D})$, in addition to the outcome regressions. From a purely Bayesian perspective, this may appear unnecessary: the ground-truth causal estimand $\psi$ does \textit{not} depend on the propensity score, and, thus, we do not need its posterior. We nevertheless incorporate the propensity score posterior for two fundamental reasons: (i)~to \emph{resolve the formerly mentioned prior-induced confounding bias} as it increases uncertainty of the posterior in the regions of low overlap. Furthermore, (ii)~by using the OSPC, we allow for the frequentist $\sqrt{n}$-consistency even when posteriors for both $\mu_a$ and $\pi$ concentrate at slower-than-parametric rates around the ground-truth, which yields a Bayesian analogue of \emph{double-robustness}. Both (i) and (ii) are formalized in the following theorem. 

\begin{theorem}[Semi-parametric BvM theorem of the OSPC ATE posterior] \label{thrm:bvm}
    Assume the following holds for the observational data and a Bayesian estimator of the nuisance functions. Specifically, there exists a sequence of measurable subsets $H_n$ of $\mathcal{H}$ for which $\Pi(\tilde{\eta} \in H_n \mid \mathcal{D}) \to 1$ and for which (a)--(c) hold for $a \in \{0, 1\}$: \\
    (a) $L_2$ concentration of $\tilde{\mu}_a$ and $\tilde{\pi}$: 
    \begingroup\makeatletter\def\f@size{9}\check@mathfonts
    \begin{equation} \label{eq:l2-convergence}
        R_2 =\sqrt{n} \sup_{\tilde{\pi} \in H_n} \norm{{\tilde{\pi}}^{-1} - {\pi}^{-1}} \cdot \sup_{\tilde{\mu}_a \in H_n} \norm{\tilde{\mu}_a - \mu_a} \to 0.
    \end{equation}
    \endgroup
    (b) Uniform bounding: for large $n$, there exists $C, \varepsilon > 0$ such that, for all $\tilde{\mu}_a, \tilde{\pi} \in H_n$, $\mathbb{P}(\abs{Y - \tilde{\mu}_a(X)} < C) = 1$ and $\mathbb{P}(\varepsilon < \tilde{\pi}(X) < 1 - \varepsilon) = 1$. \\
    (c) Donsker condition: nuisance functions posteriors are (informally) not too flexible \citep{yiu2025semiparametric}. \\
    Then, the OSPC ATE posterior ${\psi}^\text{OSPC}(\tilde{\eta} )$ from Eq~\eqref{eq:bayes-ospc} satisfies the BvM theorem (see Definition~\ref{def:bvm}).
\end{theorem}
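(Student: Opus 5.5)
The proof follows the standard von Mises expansion used for the frequentist A-IPTW estimator, now applied to the posterior draw $\tilde{\eta}$ and combined with a Bayesian-bootstrap BvM. On the event $\tilde{\eta}\in H_n$, which has posterior probability tending to one, I write
\begin{equation*}
    \psi^\text{OSPC}(\tilde{\eta}) - \psi = \operatorname{BB}_n\{\xi_\psi(Z;\tilde{\eta})\} - \psi = (\operatorname{BB}_n - \mathbb{P})\{\xi_\psi(Z;\eta)\} + (\operatorname{BB}_n - \mathbb{P})\{\xi_\psi(Z;\tilde{\eta}) - \xi_\psi(Z;\eta)\} + R(\tilde{\eta},\eta),
\end{equation*}
where $\mathbb{P}\{\xi_\psi(Z;\eta)\}=\psi$ and $R(\tilde{\eta},\eta) = \mathbb{P}\{\xi_\psi(Z;\tilde{\eta})\} - \psi$ is the second-order remainder. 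The proof then shows that, after multiplying by $\sqrt{n}$, the first term has the right limit and the last two are $o(1)$ uniformly over $H_n$.

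\textbf{Remainder term.} Conditioning on $X$ and using $\mathbb{E}(Y\mid X,A=a)=\mu_a$ gives the standard identity
\begin{equation*}
    R = \sum_{a\in\{0,1\}} \pm\, \mathbb{E}\Big[\pi_a(X)\big(\tilde{\pi}_a(X)^{-1} - \pi_a(X)^{-1}\big)\big(\mu_a(X) - \tilde{\mu}_a(X)\big)\Big],
\end{equation*}
with $\pi_1=\pi$ and $\pi_0=1-\pi$. By Cauchy--Schwarz and $\pi_a\le 1$, $\sqrt{n}\,|R|$ is bounded by $\sqrt{n}\,\sup\norm{\tilde{\pi}^{-1}-\pi^{-1}}\sup\norm{\tilde{\mu}_a-\mu_a}$, which tends to $0$ by (a). The analogous term for $1-\tilde{\pi}$ is handled the same way, using (b) to keep $\tilde{\pi}$ away from $0$ and $1$.

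\textbf{Empirical-process term.} By (b), the class $\{\xi_\psi(\cdot;\tilde{\eta}):\tilde{\eta}\in H_n\}$ is uniformly bounded. Moreover, $\norm{\xi_\psi(\cdot;\tilde{\eta})-\xi_\psi(\cdot;\eta)}\to 0$: this follows from (a) together with the consistency implicit in (a)--(b), namely that the bounded factors do not blow up. Under the Donsker condition (c), asymptotic equicontinuity of the empirical process gives $\sqrt{n}(\mathbb{P}_n-\mathbb{P})\{\xi_\psi(\cdot;\tilde{\eta})-\xi_\psi(\cdot;\eta)\}=o_P(1)$. The conditional multiplier/Bayesian-bootstrap Donsker theorem (van der Vaart--Wellner, Ch.~3.6) gives the same for $\sqrt{n}(\operatorname{BB}_n-\mathbb{P}_n)\{\cdots\}$. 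Here $\tilde{\eta}$ is drawn independently of the Dirichlet weights, but the bound is uniform over $H_n$, so this independence causes no issue.

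\textbf{Leading term.} $\sqrt{n}(\operatorname{BB}_n-\mathbb{P}_n)\{\xi_\psi(Z;\eta)\}$ is a weighted sum of a fixed bounded function. Its conditional law given $\mathcal{D}$ converges to $N(0,\operatorname{Var}[\phi_\psi(Z;\eta)])$; this is the Bayesian bootstrap BvM for a mean, and it holds in total variation because the Dirichlet weights have a density (e.g. via a local CLT for normalized Gamma variables). The remaining piece $\sqrt{n}(\mathbb{P}_n-\mathbb{P})\xi_\psi(Z;\eta)$ is a data-dependent shift equal to $\sqrt{n}(\hat\psi-\psi)+o_P(1)$ for the efficient A-IPTW estimator $\hat\psi$. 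I therefore read Definition~\ref{def:bvm} as centering at $\hat\psi$, i.e.\ the posterior of $\sqrt{n}(\psi(\tilde\eta)-\hat\psi)$, which is the standard semi-parametric BvM formulation of \citet{ray2020semiparametric}.

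\textbf{Main obstacle.} The hardest step is upgrading from weak convergence to total variation, since the $o_P(1)$ perturbations above only control weak distances. To handle this, I would condition on $\tilde{\eta}$ and on the Dirichlet weights' normalizing Gamma sum. I would then use that the leading Gaussian-like term has a smooth density, so a shift of size $o(1)$ changes the TV distance by $o(1)$. Alternatively, I would follow \citet{yiu2025semiparametric}, who prove TV convergence for the OSPC under exactly these conditions, and verify that their assumptions reduce to (a)--(c) for the ATE.
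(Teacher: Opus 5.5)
This is essentially the paper's argument: the sketch in Appendix~\ref{app:background-bvm} uses the same von Mises decomposition centered at the A-IPTW estimator $\hat\psi=\psi+\mathbb{P}_n\{\phi_\psi(Z;\eta)\}$ (so your reading of Definition~\ref{def:bvm} is also the paper's), a Bayesian-bootstrap CLT, the Donsker condition for the empirical-process term, the product bound from (a) for the remainder, and the same step $\sup_{H_n}\norm{\phi_\psi(\cdot;\tilde\eta)-\phi_\psi(\cdot;\eta)}\to 0$ (which strictly requires each nuisance to be $L_2$-consistent, not only the product rate in (a)); the only reorganization is that the paper applies the bootstrap CLT to $\phi_\psi(Z;\tilde\eta)$ and then uses $\operatorname{Var}[\phi_\psi(Z;\tilde\eta)]\to\operatorname{Var}[\phi_\psi(Z;\eta)]$, whereas you keep the true $\eta$ in the leading term and move the rest into a bootstrap-equicontinuity term. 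For the total-variation upgrade you flag (the paper's sketch stops at weak convergence), use the paper's grouping: conditional on $(\tilde\eta,\mathcal{D})$, $\sqrt{n}(\operatorname{BB}_n-\mathbb{P}_n)\{\xi_\psi(Z;\tilde\eta)\}$ is a single Dirichlet-weighted mean to which a local CLT applies (variance continuity gives the limit), and everything else is a shift that is deterministic given $(\tilde\eta,\mathcal{D})$ and $o_P(1)$ uniformly over $H_n$, whereas your increment $\sqrt{n}(\operatorname{BB}_n-\mathbb{P}_n)\{\xi_\psi(Z;\tilde\eta)-\xi_\psi(Z;\eta)\}$ still depends on the Dirichlet weights, so it is not a shift and conditioning on the Gamma normalizer does not make it one.
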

\begin{proof}
    This result is based on Theorem 4 of \citet{yiu2025semiparametric}. We provide the sketch of the proof in Appendix~\ref{app:background-bvm}.
\end{proof}

Intuitively, the OSPC thus recovers the frequentist consistency of \emph{any Bayesian ATE estimator of the nuisance functions} (as long as it satisfies the conditions of Theorem~\ref{thrm:bvm}): it ensures that Bayesian and frequentist uncertainties are aligned in large samples so that prior specification does not influence the posterior asymptotically. Interestingly, if a Bayesian estimator is already debiased (\eg, CausalPFN/CausalFM may arguably perform debiasing out of the box as they were trained on the causal datasets), the OSPC does not change the plug-in posterior (see Appendix~\ref{app:ospc-cal} for further details).

In the following, we develop a framework so that the OSPC ATE posterior can also be implemented for the PFNs. 

\vspace{-0.1cm}
\subsection[Sampling Nuisance Functions with MPs]{\circled{3}~Sampling Nuisance Functions with MPs} \label{sec:mp-ospc}
\vspace{-0.1cm}

In the following, we aim to recover smooth functional posteriors $\tilde{\eta} \sim \Pi(\eta \mid \mathcal{D})$ from a PPD of some PFN, so that they can be used in the OSPC ATE posterior from Eq~\eqref{eq:bayes-ospc}. For this, we use a predictive property of the PFNs together with an MP framework \citep{fong2023martingale, ng2025tabmgp, nagler2025uncertainty}. The MPs build on iteratively sampling and updating the PPDs:
\begingroup\makeatletter\def\f@size{9}\check@mathfonts
\begin{align} \label{eq:mp-template}
    &\text{Step } 1: Z' \sim \mathbb{P}(Z\mid \mathcal{D}); \; \mathcal{D}' =  \mathcal{D} \,\cup\, \{Z'\}; \\
    &\text{Step } 2: Z'' \sim \mathbb{P}(Z\mid \mathcal{D}'); \; \mathcal{D}'' =  \mathcal{D}' \,\cup\, \{Z''\}; \,\dots \nonumber \\
    &\text{Step } N: Z'^{N} \sim \mathbb{P}(Z\mid \mathcal{D}'^{N-1}); \; \mathcal{D}'^{N} =  \mathcal{D}'^{N-1} \,\cup\, \{Z^{'N}\}; \,\dots \nonumber
\end{align}
\endgroup
and, then, using the random infinite dataset $\{Z', Z'', \dots \}$ equipped with a random density / probability mass function $P_\infty(z) \mid \mathcal{D}$ to infer posteriors of the downstream quantities. For example, we can infer the posterior for the mean $\theta(P_Z) = \int_\mathcal{Z} z \diff{P_Z(z)}$ via $\mathbb{P}(\theta \in A \mid \mathcal{D}) = \int_{\Theta} \mathbbm{1}(\theta(P_Z)\in A) \diff \Pi (P_\infty(z) \mid \mathcal{D})$. We provide further background information on MPs in Appendix~\ref{app:background-mps}.

\textbf{PFN-only MPs.} A direct way to recover the smooth functional posteriors for $\mu_a$ and $\pi$ is to employ PFNs as a posterior update model for MPs, as was done by the TabMGP \citep{ng2025tabmgp}. Concretely, for the posterior of $\mu_a$, we instantiate the generic MP updates at Step $N$ from Eq.~\eqref{eq:mp-template} using TabPFN as follows: 
\begingroup\makeatletter\def\f@size{9}\check@mathfonts
\begin{align} \label{eq:mp-tabmgp}
    \begin{cases}
       (X, A)'^{N} \sim \mathbb{P}_n(X, A); \\
    Y'^{N} \sim \mathbb{P}(Y\mid \mathcal{D}'^{N-1}, X'^{N}, A'^{N});
    \end{cases}
\end{align}
\endgroup
where $\mathbb{P}_n(X, A)$ is the empirical distribution and $\mathbb{P}(Y\mid \mathcal{D}'^{N-1}, X'^{N}, A'^{N})$ is given by the PPD of TabPFN. After $N$ updates, we draw one sample from $\Pi(\mu_a \mid \mathcal{D})$ by marginalizing the random PPD: $\tilde{\mu}_a = \int_{\mathcal{Y}} y\diff  {P}(y \mid \mathcal{D}'^{N-1}, X'^{N}, A'^{N})$. The functional posterior for the propensity score can be obtained from TabPFN analogously by iteratively updating $\mathbb{P}(A\mid \mathcal{D}'^{N-1}, X'^{N})$. In principle, any other PFN can be used in place of TabPFN with this MP construction. However, none of the existing causal PFNs provide the PPDs for both the outcome and propensity score (see Table~\ref{tab:methods-comparison}). Hence, we later use TabPFN as the backbone in our experiments.

However, the PFN-only MPs are problematic for two reasons. (1)~PFNs often violate the so-called \emph{martingale property} \citep[i.e., the PPDs may add bias under sequential updating; see][]{nagler2025uncertainty,ng2025tabmgp}, and, thus, cannot be reliably used in the MP updates. (2)~MP updates are time-consuming when large PFNs are used (\eg, transformer-based PFNs like TabPFN scale quadratically, that is, are in $O(N^2)$). To tackle both problems, we adopt a hybrid approach of combining PFNs and copulas \citep{nagler2025uncertainty}.

\textbf{PFN+copula MPs.} \citet{nagler2025uncertainty} suggested combining the PFNs (used at Step $1$) and copulas (used at Steps $>1$), as originally suggested by \citet{fong2023martingale}. Yet, they implemented the MP updates conditionally on $x$, while we want to recover the whole functional posteriors. Thus, we define the PFN+copula MP updates at Step $N$ as follows:
\begingroup\makeatletter\def\f@size{8}\check@mathfonts
    \begin{numcases}{}
       V'^{N} = (X, A)'^{N} \sim \mathbb{P}_n(X, A); \nonumber \\
        u_N(v, V'^{N}) = 1 - \alpha_N(v, V'^{N}) + \alpha_N(v, V'^{N}) \, c_\rho(q_N, r_N);  \label{eq:mp-copulas} \\
        P_N(y\mid \mathcal{D}'^{N}, x, a) = u_N(x, a, V'^{N})\, P_{N-1}(y\mid \mathcal{D}'^{N-1},x,a ); \nonumber
    \end{numcases}
\endgroup
where $P_0(y \mid \mathcal{D}, x, a)$ is the PPD of TabPFN, $ c_\rho(\cdot, \cdot)$ a bivariate copula density with correlation $\rho \in (0, 1)$, $\alpha_N$ is a learning rate, $q_N = F_{N-1}(y \mid \mathcal{D}'^{N-1}, x, a)$, and $r_N = F_{N}(Y'^{N} \mid \mathcal{D}'^{N-1}, V'^{N})$ (where $F$ is a predictive posterior cumulative distribution function (CDF)). Then, the posterior of $\tilde{\mu}_a$ is obtained by the marginalization over $y$ (similarly to the PFN-only case). We provide further details on the specification of copulas and learning rates for $\mu_a$ and $\pi$ in Appendix~\ref{app:implementation}.

\textbf{Choice of functional posteriors.} Interestingly, the PFN+copula MPs combination allows us to recover the previously mentioned variants of functional posteriors from TabPFN (see Fig.~\ref{fig:recovering-posteriors}). Namely, depending on whether we sample $r_N$ independently from $x$ and $y$, we can recover (a)~$x$-independent posteriors (i.e., $r_N$ and $(x,y)$ are independent), (b)~$x$-parallel posteriors (i.e., $r_N$ and $(x,y)$ are dependent), and (c)~smooth posteriors (i.e., $r_N$ and only $x$ are dependent). We refer to Appendix~\ref{app:implementation} for further details on the three variants.


\textbf{Implementation details.} Combining MPs with the one-step posterior correction yields our main calibration procedure, which we denote by \textbf{\MPOSPC}. We illustrate \MPOSPC schematically in Fig.~\ref{fig:mp-ospc} in Appendix~\ref{app:implementation}. When instantiated with PFN+copula MPs, our \MPOSPC is flexible and computationally lightweight by introducing only a single hyperparameter $\rho \in (0, 1)$. Empirically, the performance of \MPOSPC is largely insensitive to the specific choice of $\rho$ and works equally well across several values for the ATE estimation. In all the experiments, we used $N = 100$ MP steps and $B = 100$ posterior draws. We refer to Appendix~\ref{app:implementation} for other implementation details.

\vspace{-0.1cm}
\subsection{BvM Property of MP-OSPC} 
\vspace{-0.1cm}

To establish the BvM theorem for our \MPOSPC, the key requirement is to show that the $L_2$-concentration property (see Theorem~\ref{thrm:bvm}) is satisfied for MP-based posteriors of a PFN. Intuitively, this condition ensures that the posterior concentrates sufficiently fast around the true nuisance functions so that the second-order remainder term vanishes asymptotically. Because PFNs are only approximately Bayesian, deriving the exact concentration rates for a specific PFN is challenging and depends on many factors (\eg, the underlying transformer architecture, the training procedure, and the specification of the synthetic priors).

\begin{figure}[t]
    \vspace{-0.25cm}
    \centering
    \includegraphics[width=\linewidth]{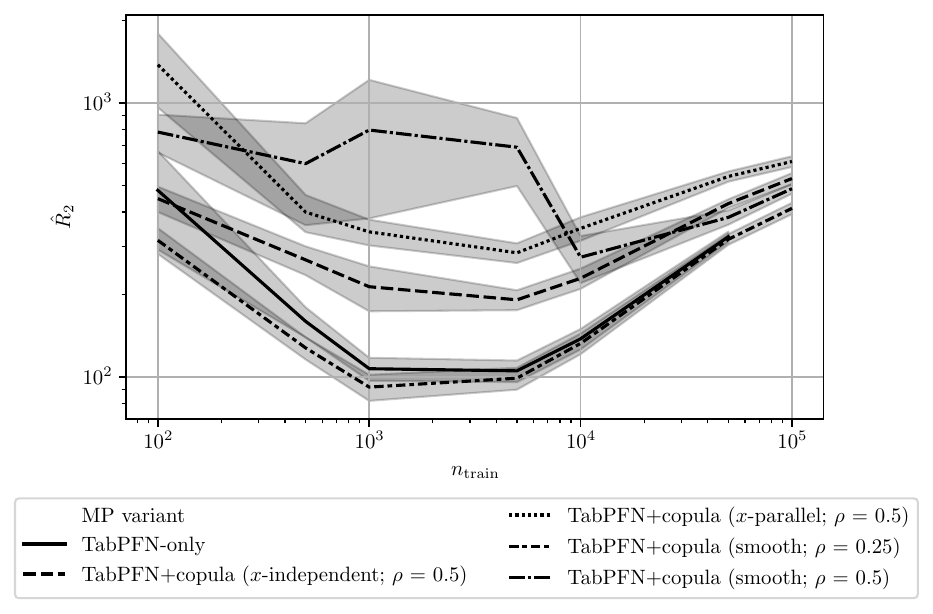}
    \vspace{-0.7cm}
    \caption{{$L_2$-concentration check} based on the synthetic data with varying size of the train data, $n_\text{train}$ (here: $d_x = 25$). Reported: mean $\hat{R}_2$ $\pm$ se over 10 runs (lower is better).  Note that both x- and y-axes are log-scaled.}
    \label{fig:convergence-check}
    \vspace{-0.7cm}
\end{figure}

Once the $L_2$-concentration condition is satisfied, the BvM theorem guarantees asymptotic normality. However, different constructions of functional posteriors (variants (a)--(c)) can still induce slightly different second-order contributions to the posterior variance. In particular, although all variants share the same leading variance term of order $1/n$, they differ by terms of order $o_\mathbb{P}(1/n)$. We provide further details in Appendix~\ref{app:func-post}.

A natural question is whether a failure of the BvM property should be attributed to the MP construction or to the underlying PFN. We view the dominant bottleneck as the latter (= PFN). Indeed, given an ideal PPD, MP uncertainty can be reduced by increasing the number of MP steps $N$; existing analyses of MP-based posterior sampling suggest a concentration speed depends on the learning rate $\alpha_N$ \citep{fong2026asymptotics,nagler2025uncertainty}. In our case, $\alpha_N$ is chosen as $O(1/(n + N))$ (see Appendix~\ref{app:implementation}), and, hence, the MP-based posterior concentrates at the rate ${O}_{\mathbb{P}}(1/\sqrt{n}) + {O}_{\mathbb{P}}(1/\sqrt{N+n}) = {O}_{\mathbb{P}}(1/\sqrt{n})$ \citep{nagler2025uncertainty}. Hence, 
the second-order remainder $R_2$ is governed by how well the initial PFN-based PPD concentrates around the true density.
Thus, in our setting, \emph{violations of the BvM conditions should be seen as limitations of current PFNs rather than of the MP-OSPC wrapper}.

\vspace{-0.1cm}
\section{Experiments}
\vspace{-0.1cm}
\textbf{Setup.} We follow a standard causal benchmarking practice by using semi-synthetic datasets, where both counterfactual outcomes $Y[a]$ are available (and thus the ATE) \citep{curth2021nonparametric}. Further, for fully-synthetic datasets, both nuisance functions $\mu_a$ and $\pi$ are known (and thus is the asymptotic variance of the A-IPTW estimator from Eq.~\eqref{eq:as-var}).\footnote{For semi-synthetic datasets, we employ the propensity score estimated with TabPFN ($\hat{\pi}$ is a posterior mean).} For all estimators, we perform a train-test data split, so that the nuisance functions are estimated with PFNs based on the train data, and the final ATE estimator is averaged/Bayesian-bootstrapped over the test data.

\textbf{Datasets.} In our experiments, we employ (i)~the synthetic data generator of \citet{curth2021nonparametric} with varying $n$ and $d_x$ but with a fixed, large degree of the observed confounding $\Delta \approx -0.5$; (ii)~IHDP dataset \citep{hill2011bayesian,shalit2017estimating} with $n = 672 + 75$, $d_x = 25$, and $\Delta \approx -0.02$; and (iii)~77 datasets from the ACIC 2016 datasets collection \citep{dorie2019automated} with $n = 4802$, $d_x = 82$, and varying $\Delta$. Further details are in Appendix~\ref{app:datasets}.

\begin{figure}[t]
    \vspace{-0.25cm}
    \centering
    \includegraphics[width=1.02\linewidth]{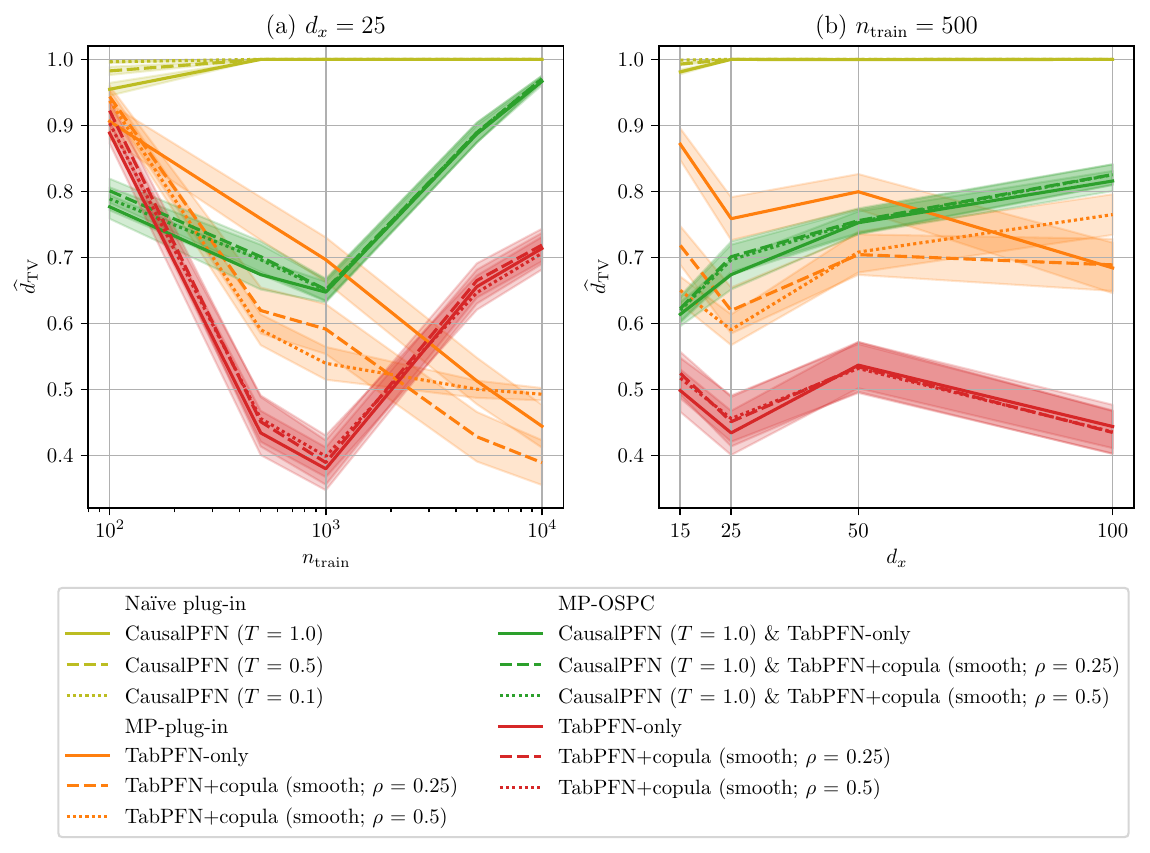}
    \vspace{-0.7cm}
    \caption{{Quality of the asymptotic uncertainty }for Bayesian ATE estimators based on the synthetic data with (a) varying size of the train data, $n_\text{train}$, and (b) varying dimensionality of covariates, $d_x$. Reported: mean $\hat{d}_\text{TV}$ $\pm$ se over 40 runs (lower is better).}
    \label{fig:ate-synth}
    \vspace{-0.7cm}
\end{figure}

\textbf{Settings}. We performed the experiments in three different settings. (1)~First, we tested how well different MP variants converge when one uses them to infer the functional posteriors (see Appendix~\ref{app:experiments-mp}). (2)~Then, in Sec.~\ref{sec:conv-check}, we performed the check of the $L_2$-concentration property for PFN-based estimators. (3)~Finally, in Sec.~\ref{sec:ate-estim}, we report the results for the ATE estimation.

\vspace{-0.1cm}
\subsection{$L_2$-Concentration Check} \label{sec:conv-check}
\vspace{-0.1cm}
\textbf{Evaluation metric and MP variants.} Our aim is to empirically check whether the $L_2$-concentration condition holds for TabPFN when combined with the MPs, that is, whether the posterior concentrates sufficiently fast (in Appendix~\ref{app:experiments-cons}, we additionally check the $L_2$-concentration of the CausalPFN \& TabPFN combination). Hence, we evaluate an empirical analogue of the Bayesian second-order remainder $R_2$ from Eq.~\eqref{eq:l2-convergence}, i.e.,
$\hat{R}_2 = \sqrt{n}\max_{j =1,\dots,B} \norm{\tilde{\pi}_j^{-1} - \pi^{-1}} \cdot \sum_{a\in \{0, 1\}}\max_{j =1,\dots,B} \norm{\tilde{\mu}_{a,j} - \mu_a},$
where $\tilde{\eta}_j$ is the $j$-th posterior draw obtained from a given MP variant. We consider several constructions of the MP posteriors: \textbf{\emph{TabPFN-only}} (1 variant) and \textbf{\emph{TabPFN+copula}} (4 variants with different choices of functional posteriors and $\rho$). 

\textbf{Synthetic data results.} We show the results in Fig.~\ref{fig:convergence-check}. We find that the second-order remainder $\hat{R}_2$ is decreasing for nearly all the variants of the MP posteriors, as desired. Yet, when the dataset size is large (\eg, $n_{\text{train}} > 5000$), $\hat{R}_2$ starts to increase again. This appears to be a limitation of TabPFN in accurately recovering propensity score posteriors, particularly in regions where the true propensity scores are close to 0 or 1 (see the detailed concentration plots in Appendix~\ref{app:experiments-cons}). As a consequence, TabPFN can only approximately satisfy the BvM theorem. In the subsequent experiments, we thus use \emph{TabPFN+copula (smooth)} as a default variant of the MP, due to a better performance. 

\vspace{-0.1cm}
\subsection{ATE Estimation} \label{sec:ate-estim}
\vspace{-0.1cm}
\textbf{Evaluation metrics.} We evaluate the uncertainty of the ATE estimators in two settings: (i)~\textit{asymptotic} and (ii)~\textit{finite-sample}. For (i), we estimate the empirical analogue of the total variation $\hat{d}_\text{TV}$ between a Bayesian estimator and the asymptotic distribution of the A-IPTW estimator from Eq.~\eqref{eq:as-var}. For (ii),~we assess the calibration of the credible intervals in the frequentist sense using a probability integral transform diagnostic: we repeat the experiment multiple times, evaluate the posterior ATE CDF at the ground-truth value, and then estimate the Kolmogorov-Smirnov distance $\hat{d}_\text{KS}$ between the random CDFs and a uniform distribution.

\textbf{Baselines.} In both settings (i) and (ii), we compare different Bayesian ATE estimators from three categories: (1)~\textbf{\emph{na\"ive plug-ins}} (namely, CausalPFN w/ different temperature $T$), (2)~\textbf{\emph{MP-plug-ins}} (= MP w/ the plug-in posterior), and (3)~\textbf{\MPOSPC} (= MP w/ the OSPC posterior). Note that TabPFN and CausalFM cannot be used as (1)~{na\"ive plug-ins} as they provide the total uncertainty and \textit{not} the uncertainty of $\mu_a$. Also, in the latter two categories (2)--(3), we consider different backbones for modeling $\mu_a$ and $\pi$ with different hyperparameters (\eg, TabPFN or a combination of CausalPFN and TabPFN). We refer to Appendix~\ref{app:background-pfns} for further details on the baselines.

\textbf{Synthetic data results.} The results for ATE estimation in (i)~asymptotic setting are in Fig.~\ref{fig:ate-synth}. Therein, different variants of our \MPOSPC  improve over plug-in estimators based on TabPFN and CausalPFN, and often achieve the best asymptotic alignment with the A-IPTW estimator for varying (a) dimensionality of covariates and (b) train data sizes. Notably, the slight drop in performance of \MPOSPC for large data sizes (e.g., $n_{\text{train}} > 1000$) is expected and can be attributed to a growing $R_2$ (see Fig.~\ref{fig:convergence-check} and Fig.~\ref{fig:detasiled-l2-check}). Furthermore, we report the sensitivity to different hyperparameters ($\rho$, $B$, $N$) and (ii)~finite-sample results in Appendix~\ref{app:experiments-ate}.

\textbf{IHDP results.} The results for the IHDP dataset are in Appendix~\ref{app:experiments-ate}, which support similar conclusions from above.

\textbf{ACIC 2016 results.} Table~\ref{tab:ate-acic-results} shows pooled results across 77 datasets for both (i) and (ii) (we excluded TabPFN-only MP model from the experiments due to a too-long runtime). Overall, our \MPOSPC consistently improves upon the corresponding plug-in baselines in terms of both asymptotic alignment and finite-sample calibration. Notably, while our \MPOSPC does not substantially alter the results for CausalPFN, it consistently improves over TabPFN-based plug-in estimators. This can be attributed to (1)~the distribution of confounding levels $\Delta$ in the ACIC 2016 datasets aligns well with the implicit PFN prior (cf. Fig.~\ref{fig:prior-bias}), in contrast to the synthetic setting; and (2) the fact that CausalPFN was trained with access to counterfactual data such that it oftentimes yields well-calibrated, already corrected posteriors for ${\mu}_a$ (we refer to Appendix~\ref{app:experiments-ate} for a more detailed overview of the ACIC 2016 results). Taken together, the results of the ACIC 2016 datasets confirm our theory that the OSPC does not degrade well-calibrated estimators and provides the greatest gains when the underlying PFN is miscalibrated.

\begin{table}[t]
    \centering
    \vspace{-0.1cm}
    \scalebox{0.51}{\begin{tabu}{lll|c|c}
\toprule
 &  &  & (i) $\%_{\text{TV}}$ & (ii) $\%_{\text{KS}}$ \\
$\mu_a$ model & $\pi$ model & ATE estimator &  &  \\
\midrule
\multirow{3}{*}{CausalPFN ($T$ = 0.5)} & --- & Na\"ive plug-in & \textbf{54.16} & 30.95 \\
\cmidrule(lr){2-5}
 & TabPFN+copula (smooth; $\rho$ = 0.25) & MP-OSPC & 18.70 & \textbf{45.24} \\
\cmidrule(lr){2-5}
 & TabPFN+copula (smooth; $\rho$ = 0.5) & MP-OSPC & 27.14 & 23.81 \\
\midrule \midrule
\multirow{3}{*}{CausalPFN ($T$ = 0.1)} & --- & Na\"ive plug-in & \textbf{39.74} & 31.65 \\
\cmidrule(lr){2-5}
 & TabPFN+copula (smooth; $\rho$ = 0.25) & MP-OSPC & 34.94 & \textbf{49.37} \\
\cmidrule(lr){2-5}
 & TabPFN+copula (smooth; $\rho$ = 0.5) & MP-OSPC & 25.32 & 18.99 \\
\midrule \midrule
\multirow{2}{*}{TabPFN+copula (smooth; $\rho$ = 0.25)} & --- & MP-plug-in & 14.81 & 20.45 \\
\cmidrule(lr){2-5}
 & TabPFN+copula (smooth; $\rho$ = 0.25) & MP-OSPC & 33.25 & \textbf{34.09} \\
\midrule 
\multirow{2}{*}{TabPFN+copula (smooth; $\rho$ = 0.5)} & --- & MP-plug-in & 14.29 & 22.73 \\
\cmidrule(lr){2-5}
 & TabPFN+copula (smooth; $\rho$ = 0.5) & MP-OSPC & \textbf{37.66} & 22.73 \\
\bottomrule
\multicolumn{3}{l}{Higher $=$ better (best in bold) }
\end{tabu}
}
    \vspace{0.1cm}
    \caption{{Quality of (i) asymptotic and (ii) finite-sample uncertainties} of Bayesian ATE estimators based on 77 ACIC-2016 datasets. Reported: (i) \% of runs with the best performance wrt. $\widehat{d}_{\text{TV}}$ over 77 datasets times 10 runs each ($\%_{\text{TV}}$), and (ii) \% of runs with the best performance wrt. $\widehat{d}_{\text{KS}}$ over 77 datasets each evaluated with 10 runs ($\%_{\text{KS}}$). Both percentages of (i) and (ii) are grouped wrt. the underlying PFN.}
    \label{tab:ate-acic-results}
    \vspace{-0.8cm}
\end{table}

\vspace{-0.1cm}
\section{Conclusion}
\vspace{-0.1cm}
\textbf{Limitations.} In our work, we discovered that current PFNs lack formal asymptotic guarantees (see, \eg, Sec.~\ref{sec:conv-check}) and, thus, are mainly useful for \emph{medium-sized datasets}. Yet, we argue that the key obstacle to BvM lies in the PFNs themselves rather than in our MP-OSPC wrapper.

\textbf{Extensions.} Here, we focus on the frequentist consistency of the ATE estimation with PFNs, and our work naturally extends to any \emph{finite-dimensional target estimand}. On the other hand, the extension to the heterogeneous treatment effects is an open research problem: Namely, those are \emph{infinitely-dimensional target estimands} and, hence, their frequentist uncertainty is not well-defined.

\textbf{Summary.} Our work is the first to provide a principled way to use PFNs as Bayesian ATE estimators so that they achieve frequentist consistency.

\newpage
\section*{Impact Statement}

This paper presents work whose goal is to advance the field of machine learning. There are many potential societal consequences of our work, none of which we feel must be specifically highlighted here.

\textbf{Acknowledgments.} This paper is supported by the DAAD program “Konrad Zuse Schools of Excellence in Artificial Intelligence”, sponsored by the Federal Ministry of Education and Research. S.F. acknowledges funding via Swiss National Science Foundation Grant 186932. This work has been supported by the German Federal Ministry of Education and Research (Grant: 01IS24082). 
RGK is supported by a Canada CIFAR AI Chair and a Canada Research Chair Tier II in Computational
Medicine (CRC-2022-00049). This research was supported by an NFRF Special Call NFRFR2022-00526. Resources used in preparing this research were provided, in part, by the Province of Ontario,
the Government of Canada through CIFAR, and companies sponsoring the Vector Institute.

\bibliography{bibliography}
\bibliographystyle{icml2026}


\appendix
\onecolumn


\newpage
\section{Background Materials} \label{app:background}
\subsection{Prior-data Fitted Networks} \label{app:background-pfns}

\textbf{PFNs as amortized Bayesian inference.} Prior-data fitted networks (PFNs) are foundation models that amortize Bayesian inference by pretraining on synthetic datasets sampled from a user-specified prior over data-generating processes \citep{muller2022transformers}. Concretely, let $\Pi_0$ denote the (synthetic) prior over data-generating processes $\mathcal{D} \sim \Pi_0(\mathcal{D})$, and let a sampled process induce a joint distribution for $(X,Y)$ (or, more generally, for $(X,A,Y)$ in our causal setting). For a dataset $\mathcal{D}$ and a query input $x$, Bayesian inference yields a posterior predictive density (PPD)
\begin{equation} \label{eq:pfn-ppd}
    P(y \mid \mathcal{D}, x) = \int P(y \mid x, \theta)\,\diff \Pi(\theta \mid \mathcal{D}),
\end{equation}
where $\theta \in \Theta$ denotes latent parameters indexing the data-generating process (for the sake of generality, we assume that $\Theta$ can be an infinitely-dimensional (functional) space). A PFN is trained to approximate the mapping $(\mathcal{D}, x) \mapsto P(\cdot \mid \mathcal{D}, x)$ directly with a single forward pass, \ie, it learns a predictor that outputs a distribution $P_\theta(\cdot \mid \mathcal{D}, x) \approx P(\cdot \mid \mathcal{D}, x)$, where $\theta$ are parameters of the PFN. In practice, PFNs are implemented as conditional neural processes \citep{garnelo2018conditional} (often transformer-based) and trained by minimizing the expected negative log-likelihood on synthetic tasks:
\begin{equation} \label{eq:pfn-train-obj}
    \theta^\star \in \argmin_\theta \;
    \mathbb{E}_{\Pi_0}\,\mathbb{E}_{\mathcal{D}}\,\mathbb{E}_{(X,Y)}
    \big[-\log P_\theta(Y \mid \mathcal{D}, X)\big].
\end{equation}
As a result, the PFN induces an \emph{implicit prior} for any functional of interest, as it depends on the synthetic pretraining datasets distribution. When this implicit prior differs from the true data-generating process, it can systematically influence posterior predictions and downstream inference; in our setting, this distortion takes the form of prior-induced confounding bias studied in Sec.~\ref{sec:picb}.

\textbf{Pointwise PPDs versus functional posteriors.} The defining output of a PFN is a \emph{pointwise} PPD, $P(\diamond \mid \mathcal{D}, x)$, where $\diamond$ depends on what was used as the prediction target during pretraining. This is fundamentally different from classical Bayesian semi-parametric models, which define explicit functional posteriors $\Pi(\eta \mid \mathcal{D})$ over nuisance functions $\eta$.
Given a PPD, we can always extract posterior means pointwise; for instance, for a regression-type PPD, we have
\begin{equation} \label{eq:pfn-post-mean}
    \hat{\mu}(x) = \int_{\mathcal{Y}} y \diff P(y \mid \mathcal{D}, x),
\end{equation}
and, for a binary target, we obtain posterior mean probabilities analogously.
However, the OSPC posterior from Sec.~\ref{sec:ospc} requires sampling \emph{entire nuisance functions} $\tilde{\eta} \sim \Pi(\eta \mid \mathcal{D})$. Importantly, PFNs do \emph{not} provide $\Pi(\eta \mid \mathcal{D})$ directly, and there is \emph{not} a unique way to reconstruct a functional posterior from the same collection of pointwise PPDs (see Fig.~\ref{fig:recovering-posteriors} for illustrative examples). This motivates our use of martingale posteriors in Appendix~\ref{app:background-mps}.

\textbf{PFNs for causal inference and the outputs in Table~\ref{tab:methods-comparison}.} Recent work trains PFNs directly for causal estimands by simulating causal structures during pretraining, including Do-PFN \citep{robertson2025pfn}, CausalPFN \citep{balazadeh2025causalpfn}, and CausalFM \citep{ma2026foundation}. These methods differ in what \emph{posterior distribution} they output in the sense of Table~\ref{tab:methods-comparison}:

\begin{itemize}
    \item \textbf{TabPFN.} TabPFN \citep{hollmann2023tabpfn,hollmann2025accurate} is a general-purpose PFN for tabular prediction. When used as a nuisance learner in causal inference settings (marked by $^\dagger$ in Table~\ref{tab:methods-comparison}), we treat $(X,A)$ as inputs and $Y$ as the label, so the PFN provides an outcome PPD
    \begin{equation}
        P(y \mid \mathcal{D}, x, a),
    \end{equation}
    which we interpret as a pointwise predictive distribution for $Y[a]$ given $(x,a)$.
    Its posterior mean yields $\hat{\mu}_a(x)=\int_{\mathcal{Y}} y\,\diff P(y \mid \mathcal{D}, x, a)$ and the plug-in CATE mean $\hat{\tau}(x)=\hat{\mu}_1(x)-\hat{\mu}_0(x)$. In addition, by running TabPFN in classification mode with label $A$ and features $X$, we obtain a pointwise treatment PPD, $P(A = a \mid \mathcal{D}, x)$, and hence $\hat{\pi}(x)=P(A=1 \mid \mathcal{D}, x)$.
    \item \textbf{Do-PFN.} Do-PFN \citep{robertson2025pfn} provides a PPD for counterfactual outcomes $Y[a]$ (Table~\ref{tab:methods-comparison}) but does \emph{not} provide a propensity-score PPD but relies on a formulation that can be \emph{non-identifiable} (marked by $^{\ddagger}$). Following \citet{ma2026foundation}, we thus exclude Do-PFN from our analysis.
    \item \textbf{CausalPFN.} CausalPFN \citep{balazadeh2025causalpfn} departs from label-level PPDs and instead outputs a posterior distribution over conditional outcome mean, in particular, a pointwise density ${P}(\mu_a \mid \mathcal{D}, x)$ (and thus also for $\tau$ through $\mu_1-\mu_0$). Because it does \emph{not} provide a label-level PPD ${P}(Y[a]\mid \mathcal{D},x)$ (nor a propensity-score PPD), it \emph{cannot} be used with the MP constructions studied here (Table~\ref{tab:methods-comparison}).
    \item \textbf{CausalFM.} CausalFM \citep{ma2026foundation} outputs a pointwise PPD for the treatment effect itself, \ie, the PPD of a distribution $\mathbb{P}(\Delta=Y[1]-Y[0]\mid \mathcal{D},x)$, namely ${P}(\delta \mid \mathcal{D},x)$. While this is useful for effect prediction, it does \emph{not} provide the joint set of nuisance-function posteriors needed for our OSPC-based ATE inference. Furthermore, it also \emph{cannot} be combined with the MPs, as the outputs $Y[1]-Y[0]$ are not the part of the input dataset $\mathcal{D}$ (thus, a pointwise posterior density of CATE, ${P}(\tau \mid \mathcal{D}, x)$,  cannot be recovered). Therefore, CausalFM \emph{cannot} even be used as a na\"ive Bayesian plug-in estimator of the ATE.
\end{itemize}

\textbf{Choice of baselines.} Table~\ref{tab:methods-comparison} summarizes which PPDs and posterior means are available from each PFN “out of the box.” In our work, TabPFN and CausalPFN are the only two baselines that can correctly provide the uncertainty of the outcome model, $\mu_a$ -- and, thus, the Bayesian uncertainty of the ATE. Yet, TabPFN is the only model that provides PPDs for both the outcome and the propensity models and is therefore viable for the OSPC; we then recover \emph{functional} posteriors from these pointwise PPDs using martingale posteriors (Appendix~\ref{app:background-mps}). For the overview of connections between different PFNs and the estimators they yield, we refer to Fig.~\ref{fig:explainer}. 

\begin{figure}[H]
    \centering
    \includegraphics[width=\linewidth]{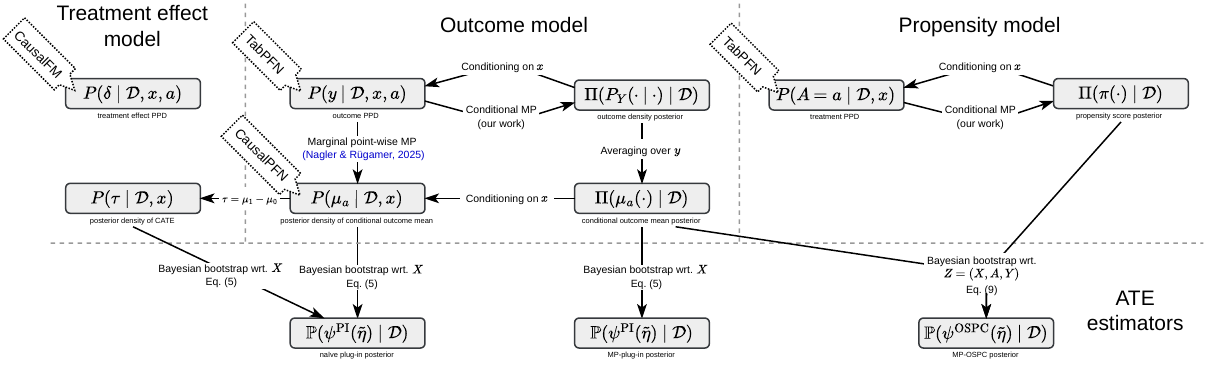}
    \caption{Overview of connections between different posterior distributions, PPDs, and the underlying PFNs.}
    \label{fig:explainer}
\end{figure}

\subsection{Martingale Posteriors} \label{app:background-mps}

Martingale posteriors (MPs) provide a general way to construct posterior distributions for parameters and functionals using \emph{only} a posterior predictive density (PPD) that can be sequentially updated, but without requiring an explicit likelihood or prior \citep{fong2023martingale}. In our setting, this is useful because PFNs naturally provide PPDs, while our OSPC construction requires draws from posteriors over nuisance functions.

\textbf{Setup.} Let $Z$ be a generic random variable with distribution $\mathbb{P}(Z)$ and density / probability mass function $P(z)$. We observe an i.i.d.\ dataset $\mathcal{D}=\{z_i\}_{i=1}^n$. Assume we have access to a PPD ${P}(z\mid \mathcal{D})$ that can be updated after augmenting the dataset with additional observations.

\textbf{MP updates and the random limit law.} MPs iteratively sample pseudo-observations from the current PPD and update the dataset, as in Eq.~\eqref{eq:mp-template}. Concretely, define $\mathcal{D}'^{0} = \mathcal{D}$ and, for $N\ge 1$,
\begin{equation} \label{eq:mp-updates-app}
    Z'^{N} \sim \mathbb{P}(Z\mid \mathcal{D}'^{N-1}),
    \qquad
    \mathcal{D'}^{N} = \mathcal{D}'^{N-1} \cup \{Z'^{N}\}.
\end{equation}
Let $\mathcal{F}_N= \sigma(\mathcal{D}'^{N})$ be the induced filtration and define the sequence of predictive laws
$
P_N(\cdot)= \mathbb{P}(Z\in \cdot \mid \mathcal{D}'^{N}).
$
A key condition is the \emph{martingale property}: for every measurable set $B\subseteq \mathcal{Z}$,
\begin{equation} \label{eq:martingale-property-app}
    \mathbb{E}\!\left[ P_{N+1}(B)\mid \mathcal{F}_N \right] = P_N(B).
\end{equation}
When Eq.~\eqref{eq:martingale-property-app} holds, $\{P_N(B)\}_{N\ge 0}$ is a bounded martingale and therefore converges almost surely with $N\to\infty$ to a random limit $P_\infty(B)$. Collecting these limits over measurable $B$ yields a random probability measure $P_\infty(\cdot)\mid \mathcal{D}$, which can be viewed as an (implicit) posterior draw over the data-generating distribution.

\textbf{Induced posteriors for functionals.} MPs induce posteriors for any downstream functional $\theta(P_Z)\in \Theta$ by pushforward through the random limit law $P_\infty(\cdot)\mid \mathcal{D}$. For example, for $\theta(P_Z)=\int_{\mathcal{Z}} z \diff P_Z(z)$, the MP posterior satisfies
\begin{equation} \label{eq:mp-pushforward-app}
    \mathbb{P}\!\left(\theta \in A \mid \mathcal{D}\right)
    =
    \int \mathbbm{1}\!\left(\theta(P_\infty)\in A\right)\diff \Pi\!\left(P_\infty \mid \mathcal{D}\right),
\end{equation}
where $\Pi(P_\infty\mid \mathcal{D})$ denotes the distribution of the random measure $P_\infty(\cdot)\mid \mathcal{D}$ induced by the MP updates.

\textbf{Connection to Bayes posteriors.} If the PPD ${P}(z\mid \mathcal{D})$ is itself the posterior predictive density of some Bayesian model with prior $\Pi(P_Z)$, then the MP construction recovers the corresponding Bayes posterior over $P_Z$ in the sense that $P_\infty(\cdot)\mid \mathcal{D}$ has the same distribution as a draw from $\Pi(P_Z\mid \mathcal{D})$ \citep{fong2023martingale}. More generally, MPs remain well-defined whenever the sequential updates satisfy the martingale property, even if the underlying likelihood/prior are not available in closed form.

\textbf{Nuisance-function posteriors as MP functionals.} In the potential outcomes setup, the nuisance functions are functionals of the joint law $P_Z$:
\[
\pi(x)=\mathbb{P}(A=1\mid X=x),
\qquad
\mu_a(x)=\mathbb{E}(Y\mid X=x,A=a).
\]
Thus, a draw $P_\infty(\cdot)\mid \mathcal{D}$ immediately induces draws $\tilde{\pi}$ and $\tilde{\mu}_a$ by evaluating these functionals at $P_\infty$. In practice, we approximate $\Pi(P_\infty\mid \mathcal{D})$ by running Eq.~\eqref{eq:mp-updates-app} for a finite number of MP steps $N$ and repeating the procedure to obtain multiple posterior draws, as used by our \MPOSPC implementation (cf.\ Fig.~\ref{fig:mp-ospc} in Appendix~\ref{app:implementation}). Finally, when the base predictor violates Eq.~\eqref{eq:martingale-property-app} (as may happen for PFNs under repeated updating), sequential MP updates can accumulate bias \citep{ng2025tabmgp,nagler2025uncertainty}, motivating the hybrid MP constructions we employ in the main text.

\subsection{Semi-parametric Bernstein--von Mises Theorem} \label{app:background-bvm}

In the following, we briefly sketch the statement and the proof of the semi-parametric Bernstein--von Mises theorem from \citet{yiu2025semiparametric}.

\begin{numtheorem}{1}[Semi-parametric BvM theorem of the OSPC ATE posterior] \label{thrm:bvm-app}
    Assume the following holds for the observational data and a Bayesian estimator of the nuisance functions. Specifically, there exists a sequence of measurable subsets $H_n$ of $\mathcal{H}$ for which $\Pi(\tilde{\eta} \in H_n \mid \mathcal{D}) \to 1$ and for which (a)--(c) hold for $a \in \{0, 1\}$: \\
    (a) $L_2$ concentration of $\tilde{\mu}_a$ and $\tilde{\pi}$:   
    \begin{equation} \label{eq:l2-convergence-app}
        R_2 =\sqrt{n} \sup_{\tilde{\pi} \in H_n} \norm{\frac{1}{\tilde{\pi}} - \frac{1}{\pi}} \cdot \sup_{\tilde{\mu}_a \in H_n} \norm{\tilde{\mu}_a - \mu_a} \to 0.
    \end{equation}
    (b) Uniform bounding: for large $n$, there exists $C, \varepsilon > 0$ such that, for all $\tilde{\mu}_a, \tilde{\pi} \in H_n$, $\mathbb{P}(\abs{Y - \tilde{\mu}_a(X)} < C) = 1$ and $\mathbb{P}(\varepsilon < \tilde{\pi}(X) < 1 - \varepsilon) = 1$. \\
    (c) Donsker condition:\footnote{(c) Donsker condition can also be replaced by sample splitting, given additional $\sup$-convergence and envelope conditions (see Assumption 1(c)$^*$ in \citet{yiu2025semiparametric}). } a Donsker condition holds for $H_n$ (\ie, nuisance functions posteriors are not too flexible) \citep{yiu2025semiparametric}. \\
    Then, the OSPC ATE posterior ${\psi}^\text{OSPC}(\tilde{\eta} )$ from Eq~\eqref{eq:bayes-ospc} satisfies the BvM theorem, namely:
    \begin{equation}
        d_\text{TV}\Big[\mathbb{P}(\sqrt{n}({\psi}^\text{OSPC}(\tilde{\eta} ) - \psi) \mid \mathcal{D}); N\big(0; \operatorname{Var}[\phi_\psi(Z; \eta)] \big) \Big] \to 0.
    \end{equation}
\end{numtheorem}
\begin{proof}[Proof (sketch)]
    Let $\hat\psi=\psi+ \mathbb{P}_n\{\phi_\psi(Z;\eta)\}$ denote the A-IPTW estimator of the ATE. Consider the high-posterior-probability set $H_n$ from the theorem statement, with $\Pi(\tilde\eta\in H_n\mid \mathcal{D})\to 1$.
    With $H_n$, it is enough to prove the result conditional on $\tilde\eta\in H_n$.
    
    Using the OSPC posterior,
    \begin{equation}
        \psi^{\mathrm{OSPC}}(\tilde\eta) = \psi^{\mathrm{PI}}(\tilde\eta)+ \operatorname{BB}_n \{\phi_\psi(Z;\tilde\eta)\},
    \end{equation}
    we make the following decomposition:
    \begin{equation}
        \sqrt n\big(\psi^{\mathrm{OSPC}}(\tilde\eta)-\hat\psi\big) = \sqrt n(\operatorname{BB}_n-\mathbb{P}_n)\{\phi_\psi(Z;\tilde\eta)\} + \mathbb{G}_n [\phi_\psi(Z;\tilde\eta) - \phi_\psi(Z;\eta)]-\sqrt n R_2(\eta,\tilde\eta)
    \end{equation}
    where $\mathbb{G}_n=\sqrt n(\mathbb{P}_n - \mathbb{E})$, and $R_2(\eta,\tilde\eta)$ is the second-order remainder.

    Now, we can upper-bound/control the three terms.

    First, for $\tilde\eta$, the Bayesian bootstrap central limit theorem yields:
    \begin{equation}
        \sqrt n(\operatorname{BB}_n-\mathbb{P}_n)\{\phi_\psi(Z;\tilde\eta)\} \rightsquigarrow N (0,\operatorname{Var}[\phi_\psi(Z;\tilde\eta)]).
    \end{equation}

    Second, by the Donsker condition (Assumption (c)),
    \begin{equation}
        \sup_{\tilde\eta\in H_n} \abs{ \mathbb{G}_n [\phi_\psi(Z;\tilde\eta)-\phi_\psi(Z;\eta) ] }=o_P(1).   
    \end{equation}

    Third, for the ATE the remainder satisfies $|R_2(\eta,\tilde\eta)| \lesssim \max_{a\in\{0,1\}} ||\tilde\mu_a-\mu_a|| \cdot ||\tilde\pi^{-1}-\pi^{-1}||$, hence by Assumption (a) implies 
    \begin{equation}
        \sqrt n\,\sup_{\tilde\eta\in H_n}|R_2(\eta,\tilde\eta)| \to 0.
    \end{equation}

    Finally, Assumption (a) also yields 
    \begin{equation}
        \sup_{\tilde\eta\in H_n} ||\phi_\psi(\cdot;\tilde\eta) -\phi_\psi(\cdot;\eta)|| \to 0,
    \end{equation}
    so that  $\operatorname{Var}[\phi_\psi(Z;\tilde\eta)] \to \operatorname{Var}[\phi_\psi(Z;\eta)]$.

    Therefore, $\sqrt n\big(\psi^{\mathrm{OSPC}}(\tilde\eta)-\hat\psi\big)\mid \mathcal{D} \rightsquigarrow N (0, \operatorname{Var}[\phi_\psi(Z;\eta)])$, which proves the semi-parametric BvM theorem.
\end{proof}

\newpage
\section{Questions \& Answers} 
\subsection{What if the OSPC Is Applied to an Already Calibrated Model?} \label{app:ospc-cal}

If a posterior $\tilde{\mu}_a$ is already calibrated (in our frequentist consistency sense), applying the OSPC essentially has no asymptotic effect, since $\operatorname{BB}_n\{\phi_\psi(Z; \tilde{\eta})\} \approx 0$. For example, CausalPFN may yield approximately debiased posteriors $\tilde{\mu}_a$ for datasets with not too large measured confounding $\Delta \in (-0.25, 0.25)$. The reason for this is that the debiasing term can be upper-bounded by the amount of measured confounding. To see that, we denote $\bar{\mu}_a := \mathbb{E}(Y \mid A=a)$, $\mu_A := A\bar{\mu}_1 + (1-A)\bar{\mu}_0 = \mathbb{E}(Y \mid A)$, and $w_\pi(Z) := \frac{A-\pi(X)}{\pi(X)(1-\pi(X))}$, and show the following:

\begin{align}
    \left|\mathbb{E}\{\phi_\psi(Z;\tilde{\eta})\}\right| &= \left|\mathbb{E}\left[w_\pi(Z)\{Y-\tilde{\mu}_A(X)\}+ \tilde{\mu}_1(X)-\tilde{\mu}_0(X)- \psi(\tilde{\eta})\right]\right| = \left|\mathbb{E}\left[w_\pi(Z)\{Y-\tilde{\mu}_A(X)\}\right]\right|\\
    &\leq\left|\mathbb{E}\left[w_\pi(Z)\{Y-\mu_A\}\right]\right| + \underbrace{\left|\mathbb{E}\left[w_\pi(Z)\{\mu_A-\tilde{\mu}_A(X)\}\right]\right|}_{\text{(negligible if $\tilde{\mu}_a$ is well calibrated)}}\\
    &\lesssim\left|\mathbb{E}\left[\frac{A-\pi(X)}{\pi(X)(1-\pi(X))}\{Y-\mu_A\}\right]\right| =\left|\mathbb{E}\left[\frac{A}{\pi(X)}\{Y-\bar{\mu}_1\}\right]-\mathbb{E}\left[\frac{1-A}{1-\pi(X)}\{Y-\bar{\mu}_0\}\right]\right|\\
    &=\left|\mathbb{E}\left[\frac{\mathbb{E}\{A(Y-\bar{\mu}_1)\mid X\}}{\pi(X)}\right]-\mathbb{E}\left[\frac{\mathbb{E}\{(1-A)(Y-\bar{\mu}_0)\mid X\}}{1-\pi(X)}\right]\right|\\
    &=\left|\mathbb{E}\left[\mu_1(X)-\bar{\mu}_1\right]-\mathbb{E}\left[\mu_0(X)-\bar{\mu}_0\right]\right|\\
    &=\left|\mathbb{E}\{\mu_1(X)-\mu_0(X)\}-\{\mathbb{E}(Y\mid A=1)-\mathbb{E}(Y\mid A=0)\}\right|\\
    &=\left|\mathbb{E}\{Y[1]-Y[0]\}-\{\mathbb{E}(Y\mid A=1)-\mathbb{E}(Y\mid A=0)\}\right|=|\Delta|.
\end{align}

where $\mu_A = \mathbb{E}(Y \mid A)$ is a treatment-conditional mean, and $\tilde{\eta}$ is assumed to contain the ground-truth propensity score $\pi$. Hence, the correction term $\operatorname{BB}_n\{\phi_\psi(Z; \tilde{\eta})\}$ is close to zero, so the OSPC does not alter the resulting estimator.

\subsection{How to Choose a Functional Posterior?} \label{app:func-post}

In the following, we formalize the distinction between different variants (a)-(c) of functional posteriors wrt. their asymptotic performance.

\begin{prop}[Asymptotic variance of MP-based posteriors] \label{prop:asym-var}
   Assume an MP-based Bayesian estimator $\tilde{\eta}^\diamond$ (where $\diamond$ is either a (a)~$x$-independent, (b)~$x$-parallel, or (c)~smooth variant of the MP-based posterior) satisfies the conditions of Theorem~\ref{thrm:bvm}. Then, the asymptotic variance of the MP-based OSPC ATE posterior is approximately 
    \begin{align}
        \operatorname{Var}[{\psi}^\text{OSPC}(\tilde{\eta}^\diamond ) \mid \mathcal{D}] \approx \frac{\operatorname{Var}[\phi_\psi(Z; \eta)]}{n} + o_\mathbb{P}({1}/{n}) + \delta_n^\diamond,
    \end{align}
    where
    \vspace{-5mm}
    \begin{itemize}[leftmargin=*,itemindent=0cm]
    \setlength\itemsep{-0.1cm}
        \item $\delta_n^\text{(a)} =  O_\mathbb{P}(i_n)= O_\mathbb{P}\big({\mathbb{P}_n\{V(Z)\}}/{n}\big)$ and $V(Z) = \operatorname{Var}_{\tilde{\eta}}[{\xi}_\psi(Z; \tilde{\eta}^\text{(a)}) \mid \mathcal{D}]$;
        \item $\delta_n^\text{(b)} = O_\mathbb{P}(p_n)= O_\mathbb{P}\big(\mathbb{P}_n^2\{C^\text{(b)}(Z, Z') \}) \le O_\mathbb{P}\big({\mathbb{P}_n\{V(Z)\}}\big)$ and $C^\text{(b)}(Z, Z')=\operatorname{Cov}_{\tilde{\eta}}[ {\xi}_\psi(Z; \tilde{\eta}^\text{(b)}), {\xi}_\psi(Z'; \tilde{\eta}^\text{(b)}) \mid \mathcal{D}] >0$;
        \item $\delta_n^\text{(c)} =  O_\mathbb{P}\big(s_n\big), 0 \le s_n \le p_n$.
    \end{itemize}
    Furthermore, $\delta_n^\diamond = o_\mathbb{P}(1/n)$, under the assumption (a) of Theorem~\ref{thrm:bvm} (i.e., $L_2$-concentration).
\end{prop}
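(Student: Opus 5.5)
The plan is to decompose the posterior variance with the law of total variance, conditioning on the nuisance draw $\tilde\eta^\diamond$. Recall that $\psi^{\text{OSPC}}(\tilde\eta)=\operatorname{BB}_n\{\xi_\psi(Z;\tilde\eta)\}$, and that the Dirichlet weights $W$ are drawn independently of $\tilde\eta$. This gives
\begin{equation*}
\operatorname{Var}[\psi^{\text{OSPC}}\mid\mathcal D]=\mathbb E_{\tilde\eta}\big[\operatorname{Var}_W[\operatorname{BB}_n\{\xi_\psi(Z;\tilde\eta)\}]\big]+\operatorname{Var}_{\tilde\eta}\big[\mathbb P_n\{\xi_\psi(Z;\tilde\eta)\}\big],
\end{equation*}
where we used $\mathbb E_W[\operatorname{BB}_n f]=\mathbb P_n f$.

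For the first term, use the explicit Dirichlet covariance $\operatorname{Cov}(W_i,W_j)=(n\delta_{ij}-1)/(n^2(n+1))$. This yields $\operatorname{Var}_W[\operatorname{BB}_n f]=\widehat{\operatorname{Var}}_n(f)/(n+1)$, where $\widehat{\operatorname{Var}}_n$ is the empirical variance. By assumption (b), $\xi_\psi(\cdot;\tilde\eta)$ is uniformly bounded on $H_n$. By assumption (a), together with the $L_2$ continuity of $\phi_\psi$ shown in the proof sketch of Theorem~\ref{thrm:bvm-app}, we have $\sup_{H_n}\|\xi_\psi(\cdot;\tilde\eta)-\xi_\psi(\cdot;\eta)\|\to0$. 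The Donsker/Glivenko--Cantelli condition (c) then lets us replace empirical variances by population variances uniformly. Hence the first term equals $\operatorname{Var}[\phi_\psi(Z;\eta)]/n+o_\mathbb P(1/n)$. This term is the same for all variants, because it depends on $\tilde\eta$ only through its marginal at each point.

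The variant-specific part $\delta_n^\diamond$ is the second term. Expanding it gives
\begin{equation*}
\operatorname{Var}_{\tilde\eta}[\mathbb P_n\xi_\psi]=\frac1{n^2}\sum_{i,j}C^\diamond(z_i,z_j)=\frac{1}{n}\mathbb P_n\{V(Z)\}+\frac1{n^2}\sum_{i\neq j}C^\diamond(z_i,z_j).
\end{equation*}
The diagonal part is identical across variants, since $V$ depends only on the pointwise marginals, which all variants share by construction from the same PPDs. The three cases then go as follows.

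\begin{enumerate}
\item[(a)] Under the $x$-independent construction the off-diagonal covariances vanish, because draws at distinct covariate points are independent. This leaves $i_n=\mathbb P_n\{V\}/n$.
\item[(b)] Under the $x$-parallel (comonotone) construction all covariances are nonnegative, so the whole double sum is $p_n=\mathbb P_n^2\{C^{(b)}\}$. Cauchy--Schwarz gives $C^{(b)}(z,z')\le\sqrt{V(z)V(z')}$, hence $p_n\le(\mathbb P_n\sqrt V)^2\le\mathbb P_n\{V\}$.
\item[(c)] For the smooth construction, the dependence is induced only through $x$ by the copula with $\rho\in(0,1)$. I would argue that its covariances lie between $0$ and the comonotone ones, since comonotone coupling maximizes covariance for fixed marginals (Hoeffding/Fréchet bound). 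This gives $0\le s_n\le p_n$. Nonnegativity requires positive dependence of the Gaussian copula with $\rho>0$, and this positive dependence must be preserved under the monotone transformation $\xi_\psi$. I expect this transfer to be the main obstacle, because $\xi_\psi$ is not monotone in $\tilde\pi$. I would handle it via the first-order linearization of $\xi_\psi$ around $\eta$, which is justified up to $o_\mathbb P$ terms by (a).
\end{enumerate}

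Finally, to show $\delta_n^\diamond=o_\mathbb P(1/n)$, bound every case by the worst one, $p_n\le\mathbb P_n\{V\}$. It then suffices to show $\mathbb P_n\{V\}\to0$. We have $V(z)\le\mathbb E_{\tilde\eta}[(\xi_\psi(z;\tilde\eta)-\xi_\psi(z;\eta))^2]$. Its empirical average is controlled by $\sup_{H_n}\|\xi_\psi(\cdot;\tilde\eta)-\xi_\psi(\cdot;\eta)\|^2$ plus a uniform LLN error from (c), and both vanish. The mass outside $H_n$ contributes vanishingly by (b)-boundedness and $\Pi(H_n\mid\mathcal D)\to1$.

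This only yields $\delta_n^{(b)}=o_\mathbb P(1)$ directly. To get $o_\mathbb P(1/n)$, I would instead use $\operatorname{Var}_{\tilde\eta}[\mathbb P_n\xi_\psi]\le\mathbb E_{\tilde\eta}[(\mathbb P_n\xi_\psi(\tilde\eta)-\mathbb P_n\xi_\psi(\eta))^2]$. I would then apply the decomposition from Theorem~\ref{thrm:bvm-app}: $\mathbb P_n\{\xi_\psi(\tilde\eta)-\xi_\psi(\eta)\}$ equals $n^{-1/2}\mathbb G_n[\cdots]$, which is $o_\mathbb P(n^{-1/2})$ by (c), plus the remainder, which is $o(n^{-1/2})$ by (a), plus the drift $\psi(\tilde\eta)-\psi(\eta)$ terms. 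Those drift terms cancel in the OSPC because the mean of the efficient influence function cancels them up to the remainder. Squaring gives $o_\mathbb P(1/n)$ uniformly on $H_n$.
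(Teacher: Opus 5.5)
Your decomposition is a reasonable alternative to the paper's. The paper conditions on the Dirichlet weights $\mathbf{W}$ and gets $\delta_n^\diamond=\mathbb{E}_{\mathbf{W}}[\mathbf{W}^T\mathbf{C}^\diamond\mathbf{W}\mid\mathcal{D}]=\frac{1}{n+1}\mathbb{P}_n\{V(Z)\}+\frac{n}{n+1}\mathbb{P}_n^2\{C^\diamond(Z,Z')\}$. You condition on $\tilde\eta$ instead, and your $\delta_n^\diamond=\operatorname{Var}_{\tilde\eta}[\mathbb{P}_n\{\xi_\psi(Z;\tilde\eta^\diamond)\}\mid\mathcal{D}]=\mathbb{P}_n^2\{C^\diamond(Z,Z')\}$ differs from the paper's only by a nonnegative term of size at most $\mathbb{P}_n\{V(Z)\}/(n+1)$. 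The benefit is that your last step bounds all of $\delta_n^\diamond$ at once, whereas the paper must dispose of $\frac{1}{n+1}\mathbb{P}_n\{V(Z)\}$ separately.

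The gap is in that last step. Your bound $\operatorname{Var}_{\tilde\eta}[\mathbb{P}_n\xi_\psi\mid\mathcal{D}]\le\mathbb{E}_{\tilde\eta}[(\mathbb{P}_n\{\xi_\psi(Z;\tilde\eta)-\xi_\psi(Z;\eta)\})^2\mid\mathcal{D}]$ is shown to be $o_{\mathbb{P}}(1/n)$ only for draws in $H_n$. Off $H_n$ you have nothing. Assumption (b) holds only for $\tilde\eta\in H_n$, so the ``(b)-boundedness'' you invoke outside $H_n$ is unavailable. Also, $\Pi(\tilde\eta\in H_n\mid\mathcal{D})\to1$ at an unspecified rate cannot control a posterior second moment at scale $1/n$: posterior mass $1/\log n$ on draws where $\mathbb{P}_n\xi_\psi$ is off by a constant already contributes order $1/\log n$. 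The paper closes this with an extra condition. It takes $H_n$ with $\Pi(\tilde\eta\notin H_n\mid\mathcal{D})=o_{\mathbb{P}}(1/n)$ (e.g.\ exponentially small posterior tails) and treats the integrand as bounded off $H_n$. You need the same, or a uniform-integrability substitute. A weaker form of this issue, at scale $o_{\mathbb{P}}(1)$, also affects your main-term argument and your proof of $\mathbb{P}_n\{V(Z)\}\to0$.

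Your plan for (c) also fails as stated. The Fr\'echet--Hoeffding bound gives $C^{(c)}(Z,Z')\le C^{(b)}(Z,Z')$ only if the $x$-parallel construction makes $\xi_\psi(Z;\tilde\eta)$ and $\xi_\psi(Z';\tilde\eta)$ comonotone, and it does not. The derivative $\partial\xi_\psi/\partial\tilde\mu_1(X)$ is $1-1/\tilde\pi(X)<0$ for a treated unit but $+1$ for a control unit. Likewise, $\partial\xi_\psi/\partial\tilde\mu_0(X)$ is $-1$ for a treated unit versus $\tilde\pi(X)/(1-\tilde\pi(X))>0$ for a control unit. A common upward shift of the outcome draws therefore moves $\xi_\psi$ in opposite directions on treated--control pairs; this sign flip is exactly Neyman orthogonality. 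So the outcome-model contribution to $C^{(b)}(Z,Z')$ is negative on such pairs at first order. Linearizing around $\eta$ exposes this rather than repairing it, and the same objection applies to your claim in (b) that all covariances are nonnegative.

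What does hold for every variant in your decomposition is $0\le\mathbb{P}_n^2\{C^\diamond(Z,Z')\}\le(\mathbb{P}_n\{\sqrt{V(Z)}\})^2\le\mathbb{P}_n\{V(Z)\}$. That is all your final $o_{\mathbb{P}}(1/n)$ step needs. The ordering $s_n\le p_n$, however, requires a different argument; the paper supports it only by a spatial-decay heuristic.

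A minor point: your first term is not variant-free, contrary to your remark. $\mathbb{E}_{\tilde\eta}[\widehat{\operatorname{Var}}_n(\xi_\psi)]$ contains $-\mathbb{E}_{\tilde\eta}[(\mathbb{P}_n\xi_\psi)^2]$ and hence $-\mathbb{P}_n^2\{C^\diamond(Z,Z')\}$. This piece is only $\delta_n^\diamond/(n+1)$, so the conclusion survives, but the stated reason is wrong.
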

\vspace{-0.4cm}
\begin{proof}
    We refer to Appendix~\ref{app:proofs}.
\end{proof}

Proposition~\ref{prop:asym-var} implies that three variants induce OSPC ATE posteriors with different contributions to the asymptotic variance. Under (a)~$x$-independent coupling, the contribution of the nuisance uncertainty to the ATE uncertainty is the lowest and decreases with the Bayesian bootstrap (BB) sample size at a rate $1/n$. Conversely, under (b)~$x$-parallel coupling, this contribution stays constant wrt. the BB sample size and can only be reduced by contracting posteriors with increasing size of $\mathcal{D}$. Finally, (c)~smooth coupling also provides a constant, yet smaller than (b) contribution to the ATE uncertainty, which also decreases with growing $\mathcal{D}$. Arguably, the (c)~smooth MP posterior would be the \emph{most ``natural'' choice} to represent the posterior uncertainty of nuisance functions: (1)~it preserves the original smoothness of the underlying PFN, unlike (a)~$x$-independent posteriors; and (2) it does not introduce additional dependency wrt. $x$ (and thus does not additionally inflate the variance for ATE posterior), unlike (b)~$x$-parallel posteriors.

\newpage
\section{Theoretical Results} \label{app:proofs}

\begin{numprop}{1}[Asymptotic variance of MP-based posteriors] \label{prop:asym-var-app}
    Assume an MP-based Bayesian estimator $\tilde{\eta}^\diamond$ (where $\diamond$ is either a (a)~$x$-independent, (b)~$x$-parallel, or (c)~smooth variant of the MP-based posterior) satisfies the conditions of Theorem~\ref{thrm:bvm}. Then, the asymptotic variance of the MP-based OSPC ATE posterior is approximately 
    \begin{align}
        \operatorname{Var}[{\psi}^\text{OSPC}(\tilde{\eta}^\diamond ) \mid \mathcal{D}] \approx \frac{\operatorname{Var}[\phi_\psi(Z; \eta)]}{n} + o_\mathbb{P}\Big(\frac{1}{n}\Big) + \delta_n^\diamond,
    \end{align}
    where
    \begin{itemize}[leftmargin=*,itemindent=0cm]
    \setlength\itemsep{-0.1cm}
        \item $\delta_n^\text{(a)} =  O_\mathbb{P}(i_n)= O_\mathbb{P}\big({\mathbb{P}_n\{V(Z)\}}/{n}\big)$ and $V(Z) = \operatorname{Var}_{\tilde{\eta}}[{\xi}_\psi(Z; \tilde{\eta}^\text{(a)}) \mid \mathcal{D}]$;
        \item $\delta_n^\text{(b)} = O_\mathbb{P}(p_n)= O_\mathbb{P}\big(\mathbb{P}_n^2\{C^\text{(b)}(Z, Z') \}) \le O_\mathbb{P}\big({\mathbb{P}_n\{V(Z)\}}\big)$ and $C^\text{(b)}(Z, Z')=\operatorname{Cov}_{\tilde{\eta}}[ {\xi}_\psi(Z; \tilde{\eta}^\text{(b)}), {\xi}_\psi(Z'; \tilde{\eta}^\text{(b)}) \mid \mathcal{D}] > 0$;
        \item $\delta_n^\text{(c)} =  O_\mathbb{P}\big(s_n\big), 0 \le s_n \le p_n$.
    \end{itemize}
    Furthermore, $\delta_n^\diamond = o_\mathbb{P}(1/n)$, under the assumption (a) of Theorem~\ref{thrm:bvm}  (i.e., $L_2$-concentration).
\end{numprop}
\begin{proof}
   By definition in Eq.~\eqref{eq:bayes-ospc}, the OSPC posterior is 
   \begin{equation}
       {\psi}^\text{OSPC}(\tilde{\eta}^\diamond ) \mid \mathcal{D} = \operatorname{BB}_n\big\{\xi_\psi(Z; \tilde{\eta}^\diamond) \big\}=\sum_{i=1}^n W_i \, \xi_\psi(Z_i; \tilde{\eta}^\diamond) = \mathbf{W}^T\mathbf{\Xi}^\diamond,  
   \end{equation}
   where $\tilde{\eta}^\diamond \sim \Pi(\eta \mid \mathcal{D})$ is a functional nuisance posterior, $\mathbf{W}=(W_1, \dots, W_n)^T \sim \operatorname{Dir}(n; 1, \dots,1)$ independently of $\tilde{\eta}^\diamond$ and $Z$, and $\mathbf{\Xi}^\diamond = ({\xi_\psi(Z_1; \tilde{\eta}^\diamond)}, \dots, {\xi_\psi(Z_n; \tilde{\eta}^\diamond)})^T$.

    Then, by the law of total variance:
    \begin{align}
        \operatorname{Var}[{\psi}^\text{OSPC}(\tilde{\eta}^\diamond ) \mid \mathcal{D}] = \mathbb{E}_{\mathbf{W}}\Big[\operatorname{Var}_{\tilde{\eta}}\big[\mathbf{W}^T\mathbf{\Xi}^\diamond\mid \mathcal{D}, \mathbf{W} \big]\mid \mathcal{D} \Big] + \operatorname{Var}_{\mathbf{W}}\Big[\mathbb{E}_{\tilde{\eta}}\big[\mathbf{W}^T\mathbf{\Xi}^\diamond\mid \mathcal{D}, \mathbf{W} \big]\mid \mathcal{D} \Big].
    \end{align}
   Because $\mathbf{W}$ is independent of $\mathbf{\Xi}^\diamond$ given $\mathcal{D}$: 
   \begin{itemize}
       \item $\mathbb{E}_{\tilde{\eta}}\big[\mathbf{W}^T\mathbf{\Xi}^\diamond\mid \mathcal{D}, \mathbf{W} \big] = \mathbf{W}^T \mathbf{m}$, where $  \mathbf{m}_i = M(Z_i) := \mathbb{E}_{\tilde{\eta}}[\xi_\psi(Z_i; \tilde{\eta}^\diamond) \mid \mathcal{D}]$, and $M(Z_i)$ does not depend on how the posterior $\tilde{\eta}$ is coupled and, thus, on a variant $\diamond$; 
       \item $\operatorname{Var}_{\tilde{\eta}}\big[\mathbf{W}^T\mathbf{\Xi}^\diamond\mid \mathcal{D}, \mathbf{W} \big] = \mathbf{W}^T \mathbf{C}^\diamond \mathbf{W}$, where $\mathbf{C}_{ij}^\diamond = C^\diamond(Z_i, Z_j) = \operatorname{Cov}_{\tilde{\eta}}[ {\xi}_\psi(Z_i; \tilde{\eta}^\diamond), {\xi}_\psi(Z_j; \tilde{\eta}^\diamond) \mid \mathcal{D}]$, $\mathbf{C}_{ii}^\diamond = V(Z_i)= \operatorname{Var}_{\tilde{\eta}}[{\xi}_\psi(Z_i; \tilde{\eta}^\diamond) \mid \mathcal{D}]$, and $V(Z_i)$ does not depend on how the posterior $\tilde{\eta}$ is coupled and, thus, on a variant $\diamond$.
   \end{itemize}
   Therefore, the following holds
   \begin{align}
       \operatorname{Var}[{\psi}^\text{OSPC}(\tilde{\eta}^\diamond ) \mid \mathcal{D}] = \underbrace{\mathbb{E}_{\mathbf{W}}\big[ \mathbf{W}^T \mathbf{C}^\diamond \mathbf{W}\mid \mathcal{D}\big]}_{\text{(i) BB-weight variability}} + \underbrace{\operatorname{Var}_{\mathbf{W}}\big[\mathbf{W}^T \mathbf{m} \mid \mathcal{D} \big]}_{\text{(ii) nuisance-posterior variability}}.
   \end{align}
   Now, under the assumptions of Theorem~\ref{thrm:bvm}, we can easily show that the second term (ii) matches the variance of the efficient influence function. That is, by properties of the Dirichlet distribution:
    \begin{align}
        \operatorname{Var}_{\mathbf{W}}\big[\mathbf{W}^T \mathbf{m} \mid \mathcal{D} \big] &= \frac{n \sum_{i=1}^n \mathbf{m}_i^2 - (\sum_{i=1}^n\mathbf{m}_i)^2}{n^2(n+1)} = \frac{1}{n+1} \, \mathbb{P}_n \big\{(M(Z) - \mathbb{P}_n\{M(Z)\})^2 \big\} \\
        &\approx  \frac{1}{n} \operatorname{Var}[\phi_\psi(Z; \eta)] + o_\mathbb{P}\Big(\frac{1}{n}\Big).
    \end{align}
   However, the term (i) differs for different variants $\diamond$ (here we use the properties of the Dirichlet distribution again):
   \begin{align}
       \delta_n^{\diamond} = \mathbb{E}_{\mathbf{W}}\big[ \mathbf{W}^T \mathbf{C}^\diamond \mathbf{W}\mid \mathcal{D}\big] = \frac{2}{n (n+1)} \sum_{i=1}^n V(Z_i) + \frac{1}{n(n+1)}\sum_{i,j=1; i\neq j}^n C^\diamond(Z_i, Z_j).
   \end{align}
   For $\diamond=$ (a) $x$-independent coupling, all the off-diagonal covariance terms are zero (\ie, $C^\text{(a)}(Z_i, Z_j) = 0$), as the values ${\xi}_\psi(Z_i; \tilde{\eta}^\text{(a)})$ and $ {\xi}_\psi(Z_j; \tilde{\eta}^\text{(a)})$ are independent for i.i.d. $Z_i,Z_j$. Hence, for $\diamond=$(a) $x$-independent coupling, the term (i) is
   \begin{align}
        \delta_n^\text{(a)} = \mathbb{E}_{\mathbf{W}}\big[ \mathbf{W}^T \mathbf{C}^\text{(a)} \mathbf{W}\mid \mathcal{D}\big] = \frac{2}{n+1}\mathbb{P}_n \{ V(Z)\} = O_\mathbb{P}(\mathbb{P}_n\{V(Z)\} / n).
   \end{align}
   For $\diamond=$ (b) $x$-parallel coupling, the following holds:
   \begin{align} \label{eq:x-par-var}
        \delta_n^\text{(b)} = \mathbb{E}_{\mathbf{W}}\big[ \mathbf{W}^T \mathbf{C}^\text{(b)} \mathbf{W}\mid \mathcal{D}\big] = \frac{1}{n+1}\mathbb{P}_n \{ V(Z)\} + \frac{n}{n+1} \, \mathbb{P}_n^2\{ C^\text{(b)}(Z, Z')\} = O_\mathbb{P}(\mathbb{P}_n^2\{ C^\text{(b)}(Z, Z')),
   \end{align}
   and, given our construction of $x$-parallel coupling (= co-monotonicity of ${\xi}_\psi(Z; \tilde{\eta}^\text{(b)})$ wrt. $\tilde{\eta}$), $C^\text{(b)}(Z, Z') > 0$ for any $Z, Z' \in \mathcal{Z}$. Furthermore, by Cauchy-Schwarz, we can upper-bound the last term of Eq.~\eqref{eq:x-par-var} as
   \begin{align}
       \mathbb{P}_n^2\{ C^\text{(b)}(Z, Z') \} \le \frac{1}{n^2} n \sum_{i=1}^n V(Z_i) = \mathbb{P}_n \{ V(Z)\}.
   \end{align}
    Eq.~\eqref{eq:x-par-var} also applies to $\diamond=$ (c) smooth coupling:
    \begin{equation}
        \delta_n^\text{(c)} = O_\mathbb{P}(\mathbb{P}_n^2\{ C^\text{(c)}(Z, Z')).
    \end{equation}
    Here, however, the terms $C^\text{(c)}(Z, Z') > 0$ for $Z$ and $Z'$ spatially close to each other, and  $C^\text{(c)}(Z, Z') \approx 0$ for far away points. Therefore, the following inequality holds:
    \begin{align}
        0 \le \mathbb{P}_n^2\{ C^\text{(c)}(Z, Z') \} \le \mathbb{P}_n^2\{ C^\text{(b)}(Z, Z') \}.
    \end{align}
   Finally, we show that $\delta_n^\diamond = o_\mathbb{P}(1/n)$ under the assumption (a) of Theorem~\ref{thrm:bvm} (i.e., $L_2$-concentration). As shown previously, we have 
   \begin{align}
       \delta_n^{\diamond}  &= \frac{2}{n (n+1)} \sum_{i=1}^n V(Z_i) + \frac{1}{n(n+1)}\sum_{i,j=1; i\neq j}^n C^\diamond(Z_i, Z_j) \\
       &= \frac{1}{n (n+1)} \sum_{i=1}^n V(Z_i) + \frac{1}{n(n+1)}\sum_{i,j=1}^n C^\diamond(Z_i, Z_j) \\
       &= \frac{1}{n+1}\mathbb{P}_n\{V(Z)\} + \frac{1}{n(n+1)}\operatorname{Var}_{\tilde{\eta}}\big[ \sum_{i=1}{\xi}_\psi(Z_i; \tilde{\eta}^\diamond) \mid \mathcal{D}\big] \\
       & = \frac{1}{n+1}\mathbb{P}_n\{V(Z)\} + \frac{n}{n+1} \operatorname{Var}_{\tilde{\eta}}[\mathbb{P}_n\{{\xi}_\psi(Z_i; \tilde{\eta}^\diamond)\} \mid \mathcal{D}].
   \end{align}
   Here, the first term, $\frac{1}{n+1}\mathbb{P}_n\{V(Z)\} = o_\mathbb{P}(1/n)$, given the uniform bounding assumption of Theorem~\ref{thrm:bvm}. The latter term can be upper-bounded by
   \begin{align} \label{eq:asvar-r2}
       & \operatorname{Var}_{\tilde{\eta}}[\mathbb{P}_n\{{\xi}_\psi(Z_i; \tilde{\eta}^\diamond)\} \mid \mathcal{D}] = \operatorname{Var}_{\tilde{\eta}}[\mathbb{P}_n\{{\xi}_\psi(Z_i; \tilde{\eta}^\diamond) - {\xi}_\psi(Z_i; {\eta})\}\mid \mathcal{D}] \\
    & \quad \le \mathbb{E}_{\tilde{\eta}}\big[\mathbbm{1}\{\tilde{\eta}^\diamond \in H_n \} \cdot \mathbb{P}_n \big\{{\xi}_\psi(Z; \tilde{\eta}^\diamond) - {\xi}_\psi(Z; {\eta})\big\}^2 \mid \mathcal{D}\big] + \mathbb{E}_{\tilde{\eta}}\big[\mathbbm{1}\{\tilde{\eta}^\diamond \notin H_n \} \cdot \mathbb{P}_n \big\{{\xi}_\psi(Z; \tilde{\eta}^\diamond) - {\xi}_\psi(Z; {\eta})\big\}^2 \mid \mathcal{D}\big],
   \end{align}
   Then, by using a standard second-order remainder expansion + empirical-process term for ATE for $\tilde{\eta}^\diamond \in H_n$, we yield
   \begin{equation}
       \abs{\mathbb{P}_n\{{\xi}_\psi(Z; \tilde{\eta}^\diamond) - {\xi}_\psi(Z; \eta) \}} \lesssim \max_{a \in \{0, 1\}}\sup_{\tilde{\pi}^\diamond \in H_n} \norm{\frac{1}{\tilde{\pi}^\diamond} - \frac{1}{\pi}} \cdot \sup_{\tilde{\mu}_a^\diamond \in H_n} \norm{\tilde{\mu}_a^\diamond - \mu_a} = o_\mathbb{P}({1}/{\sqrt{n}}), 
   \end{equation}
   and, by using the law of iterated expectations, we can further update the inequality in Eq.~\eqref{eq:asvar-r2}:
   \begin{align}
       \operatorname{Var}_{\tilde{\eta}}[\mathbb{P}_n\{{\xi}_\psi(Z_i; \tilde{\eta}^\diamond)\} \mid \mathcal{D}] = \Pi(\tilde{\eta}^\diamond \in H_n) \cdot o_\mathbb{P}(1/n) + O_\mathbb{P}(\Pi(\tilde{\eta}^\diamond \notin H_n)).
   \end{align}
   Finally, we can choose such concentration sets $H_n$ such that $\Pi(\tilde{\eta}^\diamond \notin H_n) = o_{\mathbb{P}}(1/n)$ (\eg, a contraction ball with exponentially small posterior tails), so the following holds: 
   \begin{align}
       \operatorname{Var}_{\tilde{\eta}}[\mathbb{P}_n\{{\xi}_\psi(Z_i; \tilde{\eta}^\diamond)\} \mid \mathcal{D}] = o_\mathbb{P}(1/n).
   \end{align}
\end{proof}

\newpage
\section{Implementation Details} \label{app:implementation}
\begin{figure*}[th]
    \centering
    \vspace{-0.1cm}
    \includegraphics[width=\linewidth]{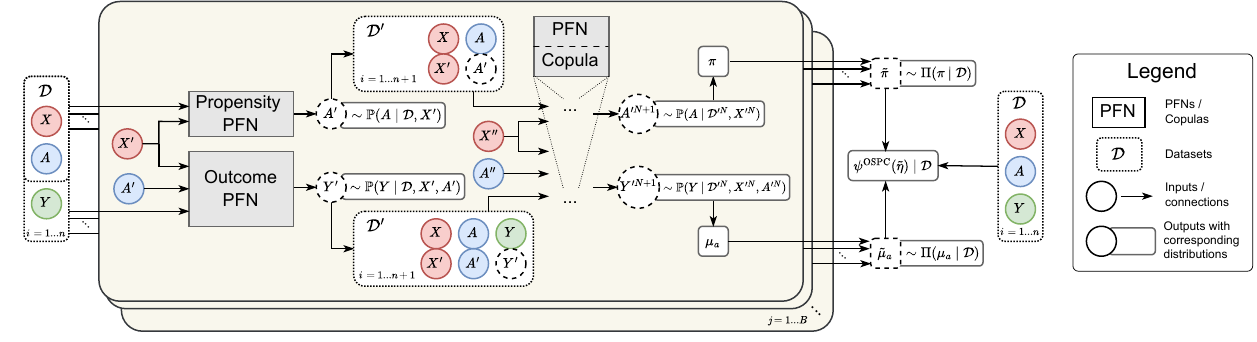} 
    \vspace{-0.2cm}
    \caption{Overview of our {\MPOSPC} calibration procedure. Here, $N$ is the number of MP steps and $B$ is the number of posterior draws from the functional posteriors. Our \MPOSPC with PFNs thus yields the OSPC ATE posterior and serves as a Bayesian ATE estimator.} 
    \vspace{-0.4cm}
    \label{fig:mp-ospc}
\end{figure*}

\subsection{Copula-based MPs Details} \label{app:copula-mps}

This section provides additional definitions for Eq.~\eqref{eq:mp-copulas}, especially for the Gaussian copula density $c_\rho$ and the weighting sequence $\alpha_N$ used in the copula-based MP updates \citep{fong2023martingale}.

\textbf{Gaussian copula density $c_\rho$ (conditional outcome mean).}
In Eq.~\eqref{eq:mp-copulas}, $c_\rho(u,v)$ denotes the \emph{bivariate Gaussian copula density} with correlation parameter $\rho\in(0,1)$. Writing $\Phi$ and $\phi$ for the standard normal CDF and density, and $\phi_2(\cdot,\cdot;\rho)$ for the standard bivariate normal density with correlation $\rho$, the Gaussian copula density is
\begin{equation} \label{eq:gaussian-copula-density}
c_\rho(u,v)
=
\frac{\phi_2\big(\Phi^{-1}(u),\,\Phi^{-1}(v);\rho\big)}
{\phi\big(\Phi^{-1}(u)\big)\,\phi\big(\Phi^{-1}(v)\big)},
\qquad (u,v)\in(0,1)^2.
\end{equation}
Intuitively, $c_\rho$ interpolates between an independent update (when $\rho \to 0$) and a sharply data-adaptive kernel (when $\rho \to 1$), with larger $\rho$ producing less smoothing \citep{fong2023martingale}.

\textbf{Beta--Bernoulli mixture copula (propensity score).}
For the propensity model, we used the following beta--Bernoulli mixture copula (instead of $c_\rho(u, v)$):
\begin{equation}
    d_\rho(u,v) = \begin{cases}
        1 - \rho + \rho \frac{\min\{u, v\}}{uv}, & \text{if } a = A'^{N}, \\
        1 - \rho + \rho \frac{u - \min\{u, 1 - v\}}{uv}, & \text{if } a \neq A'^{N}.
    \end{cases} 
\end{equation}

\textbf{The update weight $\alpha_N$.}
The scalar $\alpha_N\in(0,1)$ in Eq.~\eqref{eq:mp-copulas} controls the contribution of the copula term relative to the previous predictive, and it is also the primary driver of posterior uncertainty under predictive resampling \citep{fong2023martingale}. A sufficient condition for the resulting predictive sequence to stabilize is $\alpha_N = O((n+N)^{-1})$. Following \citet{fong2023martingale}, we use the sequence
\begin{align} \label{eq:alpha-sequence}
\alpha_N(v, V'^{N}) &= \frac{\alpha_N \prod_{j=1}^{d_v} c_\rho(\Phi(v_j), \Phi(V_j'^{N}))}{1 - \alpha_N + \alpha_N \prod_{j=1}^{d_v} c_\rho(\Phi(v_j), \Phi(V_j'^{N}))}, \qquad 
\alpha_N =
\left(2-\frac{1}{n+N}\right)\frac{1}{n+N+1},
\qquad N\ge 1,
\end{align}
which decays as $O((n+N)^{-1})$ but more slowly than the commonly used $\alpha_N=(n+N+1)^{-1}$. Empirically, this choice yields substantially better uncertainty calibration under predictive resampling \citep{fong2023martingale}.

\textbf{Replacing $r_N$ by a uniform draw (conditional outcome mean).}
In Eq.~\eqref{eq:mp-copulas}, the term $r_N$ is a \emph{probability integral transform (PIT)} quantity of the form $r_N = F_N(Y'^{N} \mid \mathcal{D}'^{N-1}, V'^{N})$. In the continuous outcome case, if $Y'^{N}\sim \mathbb{P}(Y \mid \mathcal{D}'^{N-1}, V'^{N})$, then
\begin{equation} \label{eq:pits-are-uniform}
r_N = F_N(Y'^{N} \mid \mathcal{D}'^{N-1}, V'^{N}) \sim \mathrm{Unif}(0,1).
\end{equation}
Therefore, for the outcome posterior, we may equivalently draw $r_N\sim\mathrm{Unif}(0,1)$, avoiding an explicit draw of $Y'^{N}$ and any subsequent evaluation of the predictive CDF (cf.\ Eq.~(4.7) in \citet{fong2023martingale}).

\textbf{Coupling pointwise posteriors into a functional posterior.} Different choices of how we sample $r_N$ lead to different \emph{functional} posteriors while yielding the same pointwise marginal $P_{N}(y \mid \mathcal{D}'^{N}, x, a)$:
\begin{itemize}
    \item \emph{(a) $x$-independent posteriors.} For evaluation points $(x_1,\dots,x_M)$ and $(y_1,\dots,y_K)$, draw $r_{N,1,1},\dots,r_{N,M,K} \iid \mathrm{Unif}(0,1)$ independently of both $(x_1,\dots,x_M)$ and $(y_1,\dots,y_K)$.
    \item \emph{{(b) $x$-parallel posteriors.}} For evaluation points $(x_1,\dots,x_M)$ and $(y_1,\dots,y_K)$, draw a single $r_{N} \iid \mathrm{Unif}(0,1)$.
    \item \emph{{(c) smooth posteriors.}} For evaluation points $(x_1,\dots,x_M)$ and $(y_1,\dots,y_K)$, draw $r_{N,1},\dots,r_{N,K} \iid \mathrm{Unif}(0,1)$ independently of $(y_1,\dots,y_K)$.
\end{itemize}

All three constructions share the same pointwise posteriors $P_{N}(y \mid \mathcal{D}'^{N}, x, a)$; they differ only in how uncertainty is coupled across $x$ and $y$.

\subsection{Other Implementation Details}

We implement our \MPOSPC in NumPy and PyTorch. To stabilize the ATE estimation, we truncate too low propensity scores $\pi(x)<0.05$ in both the ground-truth asymptotic variance of the A-IPTW estimator from Eq.~\eqref{eq:as-var} and the OSPC ATE posterior from Eq.~\eqref{eq:bayes-ospc}.  

\newpage
\section{Dataset Details} \label{app:datasets}

\subsection{Synthetic Dataset}

We use the simulation design from \citet{curth2021nonparametric} (Appendix D.1), where we focus on setting (ii), which combines confounding with a non-trivial, nonlinear treatment effect. Covariates are generated as $X\in\mathbb{R}^{d_x}$ with independent standard normal components and are partitioned into disjoint subsets, including confounders $X_C$ and outcome-only covariates $X_O$ (each of dimension $5$), as well as treatment-effect covariates $X_\tau$ (dimension $5$). The conditional average potential outcome (CAPO) of the control is defined as a quadratic form,
and treatment is assigned according to a nonlinear propensity score.
In setting (ii), the treated CAPO adds a quadratic treatment-effect component,
\[
\mu_1(x)=\mu_0(x)+\mathbf{1}^\top x_\tau^2,
\]
so that $\tau(x)=\mu_1(x)-\mu_0(x)=\mathbf{1}^\top x_\tau^2$, and observed outcomes are generated as
$
Y = A\mu_1(X) + (1-A)\mu_0(X) + \varepsilon
$
with $\varepsilon\sim \mathcal{N}(0,1)$. 

To study higher-dimensional regimes beyond the original design (which uses $d_x=25$), we augment the covariates by appending additional independent noise features, sampled via $X^{\text{aug}}\sim \mathcal{N}(0,1)$, and define the final covariate vector as $X = [X_{\text{orig}}, X^{\text{aug}}]$. These added covariates do not enter $\pi(x)$, $\mu_0(x)$, or $\mu_1(x)$, and therefore increase $d_x$ without changing the underlying treatment assignment mechanism or outcome-generating mechanism.

\subsection{IHDP Dataset}
The Infant Health and Development Program (IHDP) benchmark \citep{hill2011bayesian, shalit2017estimating} is a widely used semi-synthetic dataset for evaluating treatment effect estimation methods. It provides 100 fixed train--test splits with $n_\text{train}=672$ training units and $n_\text{test}=75$ test units, each described by $d_x=25$ covariates. In the data-generating mechanism, the ground-truth CAPOs take the form of an exponential outcome model under control, $\mu_0(x)$, and a linear outcome model under treatment, $\mu_1(x)$. A known limitation of IHDP is its pronounced lack of overlap, which can make propensity-score-based methods (\eg, the A-IPTW estimator) numerically unstable \citep{curth2021nonparametric, curth2021really}.

\subsection{ACIC 2016 Datasets}

The ACIC 2016 benchmark \citep{dorie2019automated} constructs covariates from the large-scale Collaborative Perinatal Project on developmental disorders \citep{Light1973TheCP}. The resulting datasets vary along several dimensions, including (i) the number of true confounders, (ii) the degree of overlap, and (iii) the smoothness and functional form of the ground-truth CAPOs. Overall, ACIC 2016 comprises {77 distinct data-generating processes}, and for each process it provides 100 samples of equal size. After one-hot encoding categorical variables, each sample contains $n=4802$ units with $d_x=82$ covariates.

\clearpage
\section{Additional Experiments} \label{app:experiments}
\subsection{MP Convergence} \label{app:experiments-mp}

As shown in \citet{nagler2025uncertainty}, the martingale property does not hold for TabPFN but does hold for copulas. Still, since the martingale property is sufficient but not necessary for MP convergence \citep{ng2025tabmgp}, we conducted empirical diagnostics. Specifically, in Fig.~\ref{fig:mp-conv}, we demonstrate the convergence plots for different variants of functional MPs depending on the number of MP steps $N$. Here, we see that, with few exceptions for large $\rho$, the posterior standard deviation generally flattens with the growing number of MP steps (\eg, for $N = 300$ with $n=100$, for $N = 250$ with $n = 500$, etc.). Importantly, even though it takes up to $N = 300$ for the full MP convergence (in some cases even more), our MP-OSPC allows for good ATE estimation already for $N=100$ (as we later demonstrate in one of the experiments in Appendix~\ref{app:experiments-ate}).

\begin{figure}[H]
    \vspace{-0.25cm}
    \centering
    \includegraphics[width=\linewidth]{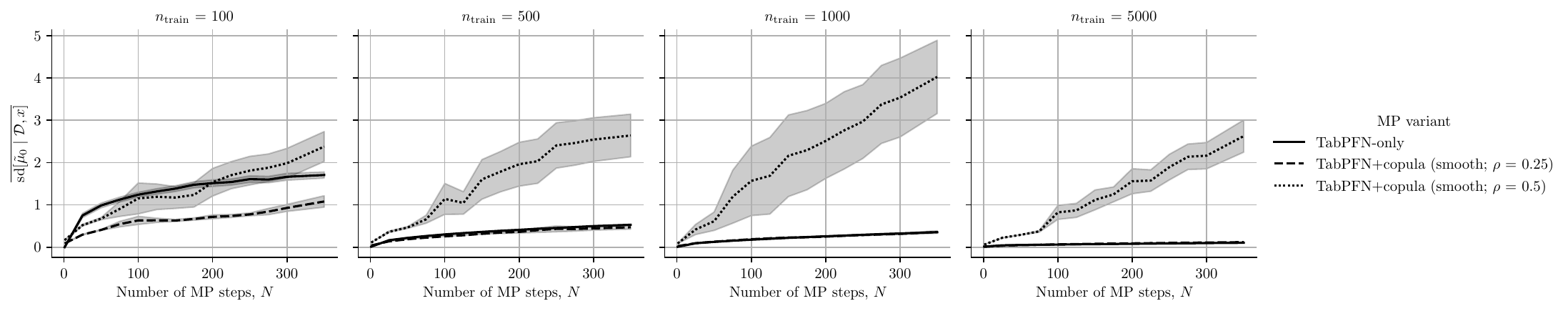} \\
    \includegraphics[width=\linewidth]{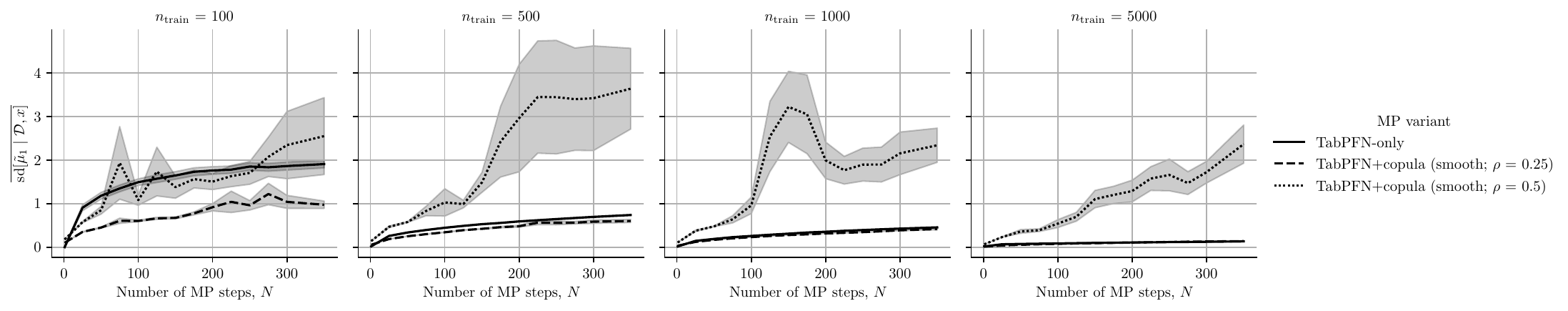} \\
    \includegraphics[width=\linewidth]{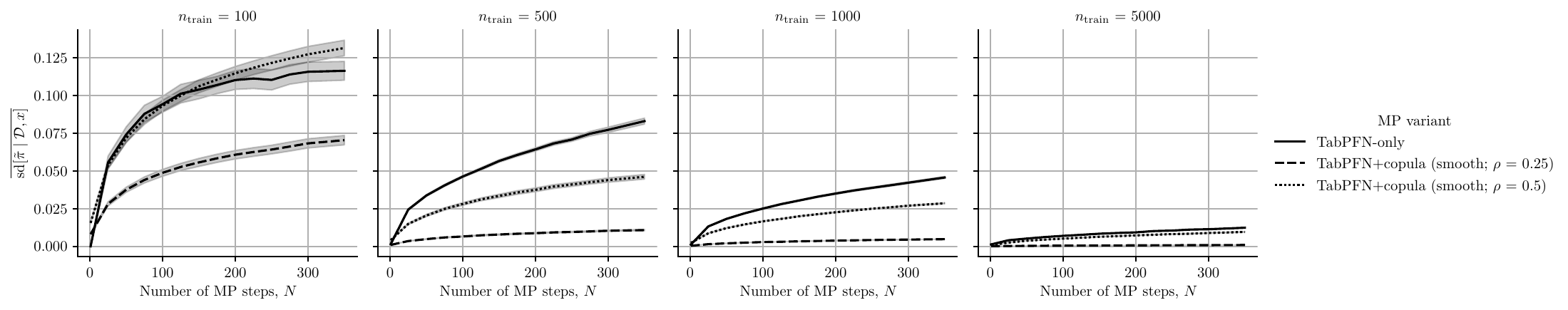}
    \vspace{-0.5cm}
    \caption{Convergence of different MP variants wrt. increasing number of MP steps $N$ for every nuisance function based on the synthetic data with varying size of the train data, $n_\text{train}$ (here $d_x = 25$). Reported: mean posterior standard deviation $\overline{\operatorname{sd}[\tilde{\diamond} \mid \mathcal{D}, x]}$ $\pm$ se over 10 runs, where $\diamond$ corresponds to different nuisance functions: (top row)~$\diamond = \mu_0$, (middle row)~$\diamond = \mu_1$, and (bottom row)~$\diamond = \pi$.}
    \label{fig:mp-conv}
    \vspace{-0.5cm}
\end{figure}

\subsection{$L_2$-Concentration Check} \label{app:experiments-cons}

\subsubsection{Detailed $L_2$-Concentration Check (TabPFN)}

In Fig.~\ref{fig:detasiled-l2-check}, we demonstrate the posterior concentration rates ($L_2$-concentration) of TabPFN with different MP variants for every nuisance function based on the synthetic data. Here, the nuisance specific errors $\hat{E}$ (that contribute to the second order remainder $\hat{R}_2$) reach their lowest value when $n_{\text{train}}=5000$ for conditional outcome mean posteriors ($\tilde{\mu}_a$), or already when $n_{\text{train}}=1000$ for inverse propensity score posterior ($\tilde{\pi}^{-1}$). These results indicate that performance improves with sample size up to a moderate regime, consistent with the $L_2$-concentration condition required for the BvM theorem. \emph{However, beyond this regime, the concentration of the inverse propensity score posterior deteriorates, which reflects a difficulty of accurately recovering inverse propensity scores for propensities close to zero and one with TabPFN}. Together, this confirms that, in practice, the contraction rates are sufficient to yield stable and well-aligned uncertainty estimates, yet only for medium-sized datasets.

\begin{figure}[H]
    \centering
    \vspace{-0.2cm}
    \includegraphics[width=0.26\linewidth]{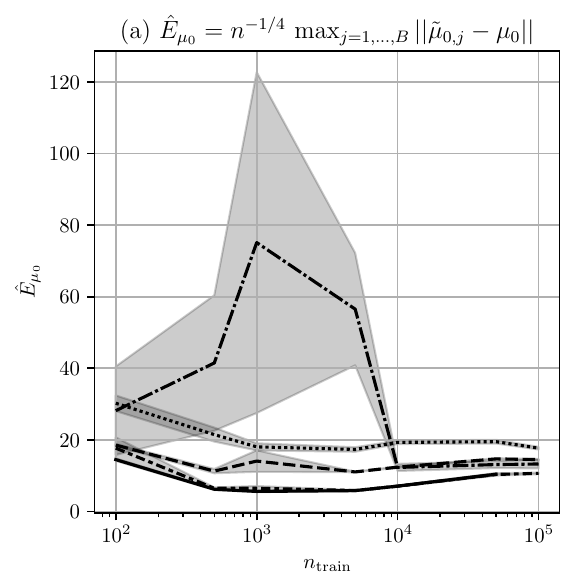}
    \hspace{-0.2cm}
    \includegraphics[width=0.255\linewidth]{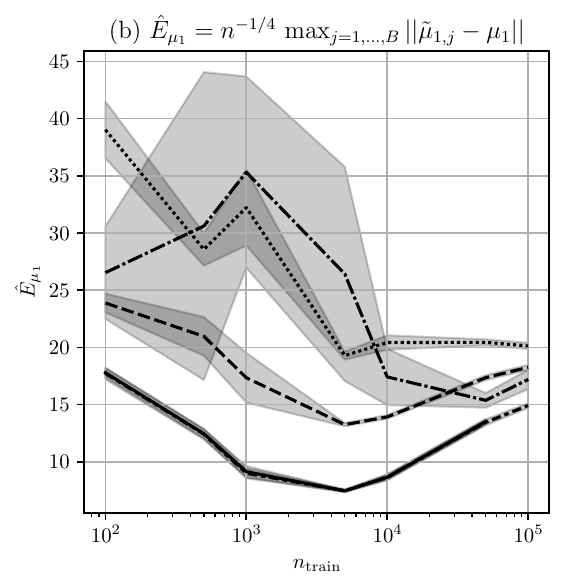}
    \hspace{-0.2cm}
    \includegraphics[width=0.483\linewidth]{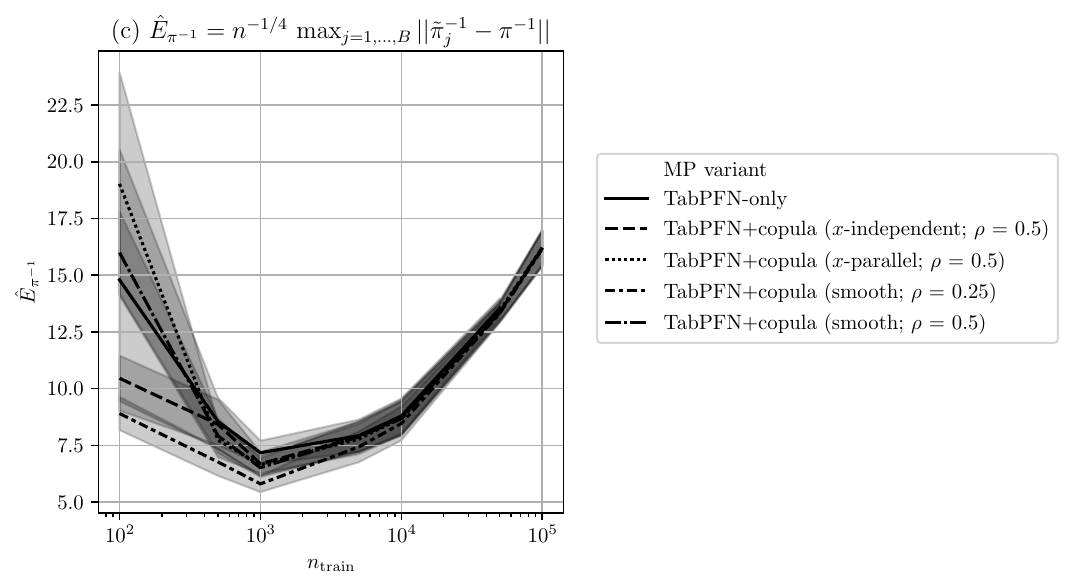}
    \vspace{-0.2cm}
    \caption{{$L_2$-concentration check} for every nuisance function based on the synthetic data with varying size of the train data, $n_\text{train}$ (here $d_x = 25$). Reported: mean $L_2$ error $\hat{E}_\diamond$ $\pm$ se over 10 runs (lower is better), where $\diamond$ corresponds to different nuisance functions: (a)~$\diamond = \mu_0$, (b)~$\diamond = \mu_1$, and (c)~$\diamond = \pi^{-1}$.}
    \label{fig:detasiled-l2-check}
    \vspace{-0.2cm}
\end{figure}

\subsubsection{Detailed $L_2$-Concentration Check (CausalPFN \& TabPFN)}

Fig.~\ref{fig:detasiled-l2-check-causalpfn} shows the detailed $L_2$-concentration results for the combination of CausalPFN and TabPFN (where CausalPFN is used as outcome models, and TabPFN is used as a propensity model). These results indicate that performance degrades with increasing sample size for CausalPFN-based functional posteriors $\mu_a$, as seen in (a)-(b). This is not surprising, considering that CausalPFN was trained on synthetic datasets with up to $n \approx 2000$. However, as seen in (c), the second-order remainder $\hat{R}_2$ converges for $n < 1000$. This hints at the double robustness property: the divergence of the CausalPFN-based posteriors for $\mu_a$ can be partially compensated with a concentration of TabPFN-based propensity score MP posterior.

\begin{figure}[H]
    \centering
    \includegraphics[width=0.31\linewidth]{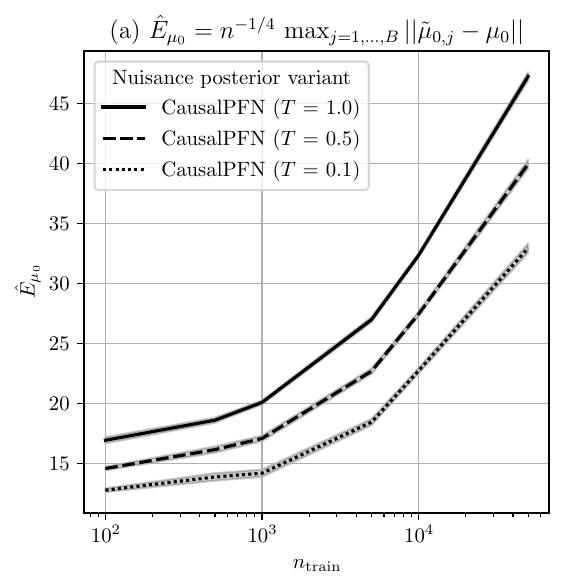}
    \includegraphics[width=0.31\linewidth]{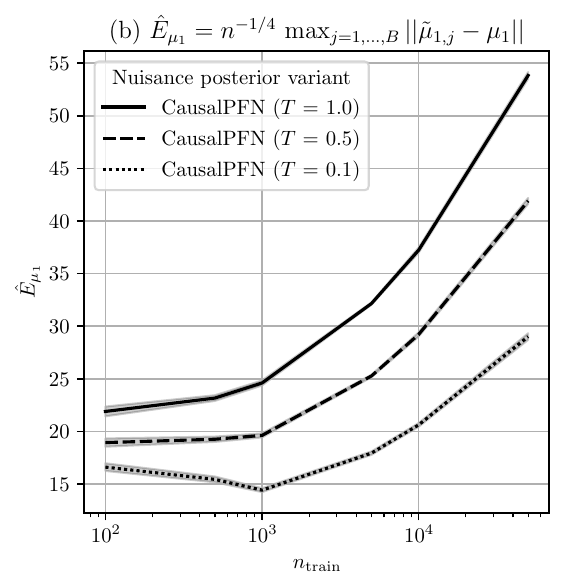}
    \includegraphics[width=0.36\linewidth]{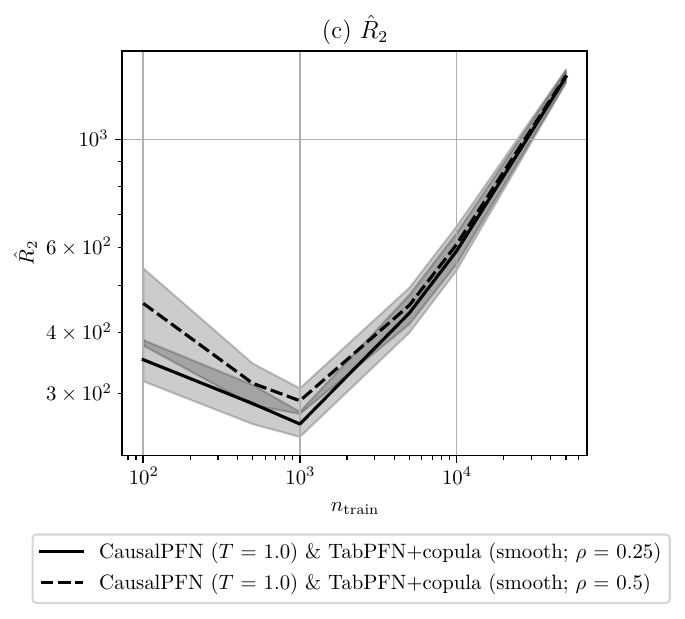}
    \caption{{$L_2$-concentration check} for every nuisance function (a)-(b) and for the second-order remainder $R_2$ (c) based on the synthetic data with varying size of the train data, $n_\text{train}$ (here $d_x = 25$). Reported: mean $L_2$ error $\hat{E}_\diamond / \hat{R}_2$ $\pm$ se over 10 runs (lower is better), where $\diamond$ corresponds to different nuisance functions: (a)~$\diamond = \mu_0$, (b)~$\diamond = \mu_1$ (note that propensity score is not available for CausalPFN).}
    \label{fig:detasiled-l2-check-causalpfn}
    \vspace{-0.2cm}
\end{figure}

\subsection{ATE Estimation} \label{app:experiments-ate}

\subsubsection{Sensitivity to Hyperparameters}

Fig.~\ref{fig:ate-synth-N-B} shows the quality of the (i)~asymptotic ATE uncertainty wrt. varying hyperparameters $B$ (number of posterior draws) and $N$ (number of MP steps). Here, our MP-based ATE posteriors converge already when $B =75$ and $N = 100$. Importantly, the performance of our MP-OSPC remains relatively stable under different hyperparameters, highlighting the first-order insensitivity to the misspecification of the nuisance functions posteriors.

Furthermore, Fig.~\ref{fig:ate-synth-copula} demonstrates the quality of the (i)~asymptotic ATE uncertainty wrt. different copulas and their corresponding hyperparameters. Specifically, we report the results for two different copulas that are used for the outcome model: (a)~a Gaussian copula with the correlation hyperparameter $\rho \in (0, 1)$, and (b)~a Gumbel copula with the hyperparameter $\theta \ge 1$. Here, while the performance of the MP-plug-ins heavily depends on the choice of copulas and their hyperparameters $\rho/\theta$, the performance of our MP-OSPC remains stable, again highlighting the first-order insensitivity to the misspecification of the nuisance functions posteriors.

\begin{figure}[H]
    \vspace{-0.25cm}
    \centering
    \includegraphics[width=0.38\linewidth]{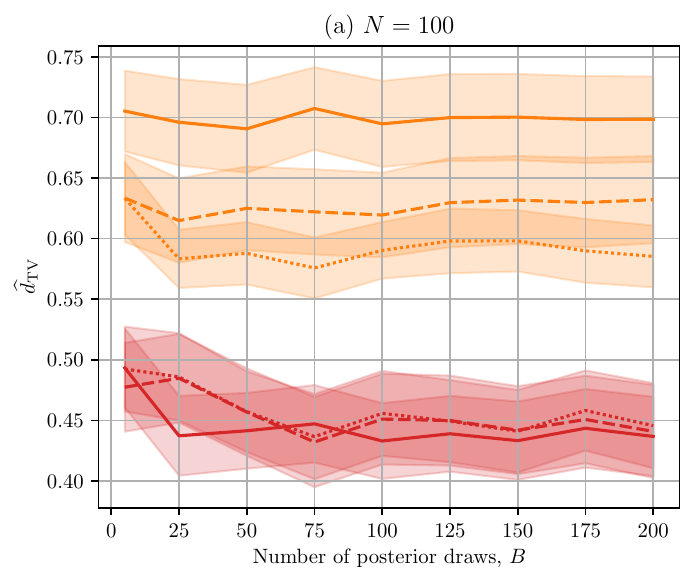}
    \includegraphics[width=0.61\linewidth]{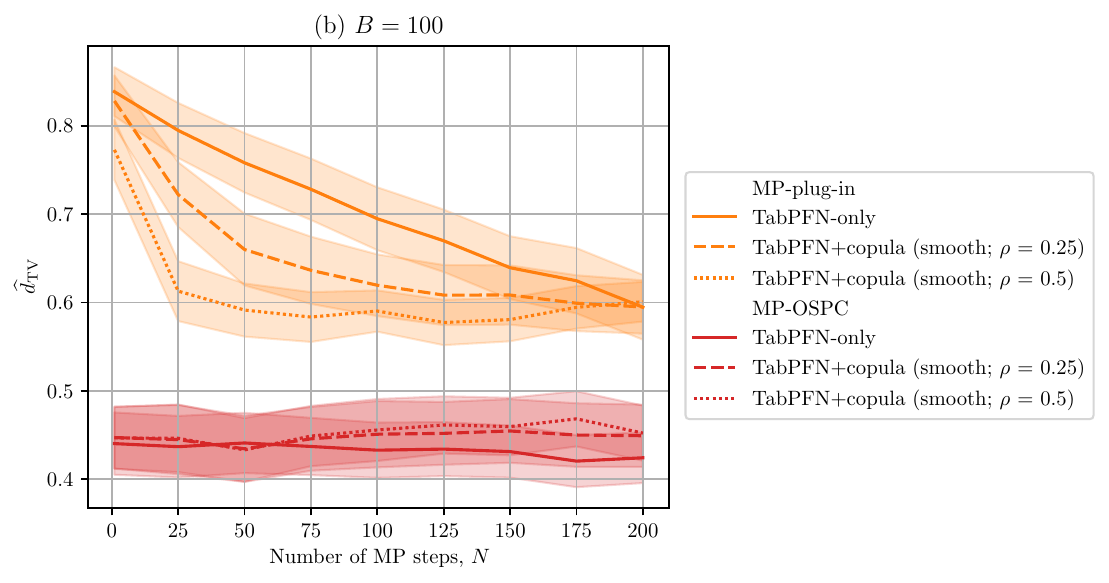}
    \vspace{-0.1cm}
    \caption{{Quality of the asymptotic uncertainty }for Bayesian ATE estimators based on the synthetic data with (a) varying number of posterior draws, $B$, and (b) varying number of MP steps, $N$ (here $n_\text{train} = 500, d_x = 25$). Reported: mean $\hat{d}_\text{TV}$ $\pm$ se over 40 runs (lower is better).}
    \label{fig:ate-synth-N-B}
    \vspace{-0.5cm}
\end{figure}

\begin{figure}[H]
    \vspace{-0.25cm}
    \centering
    \includegraphics[width=0.45\linewidth]{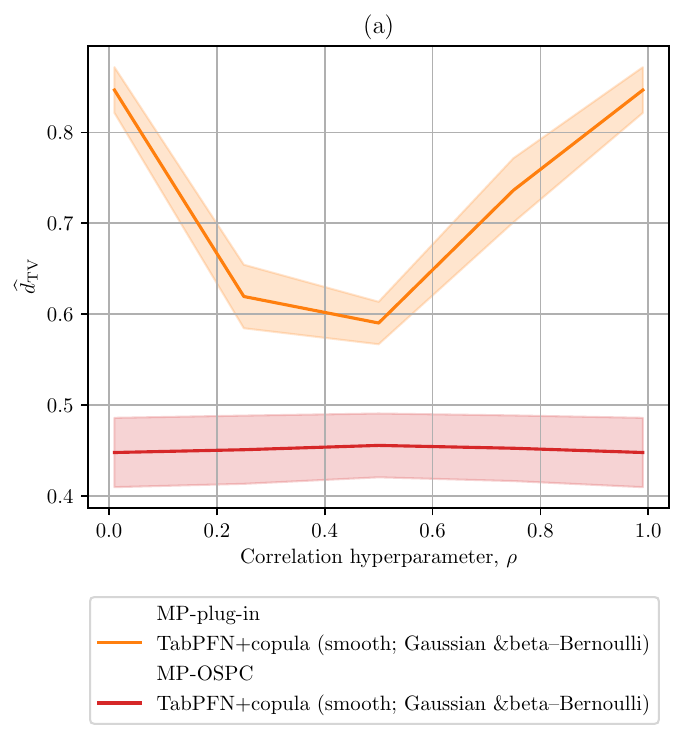}
    \includegraphics[width=0.45\linewidth]{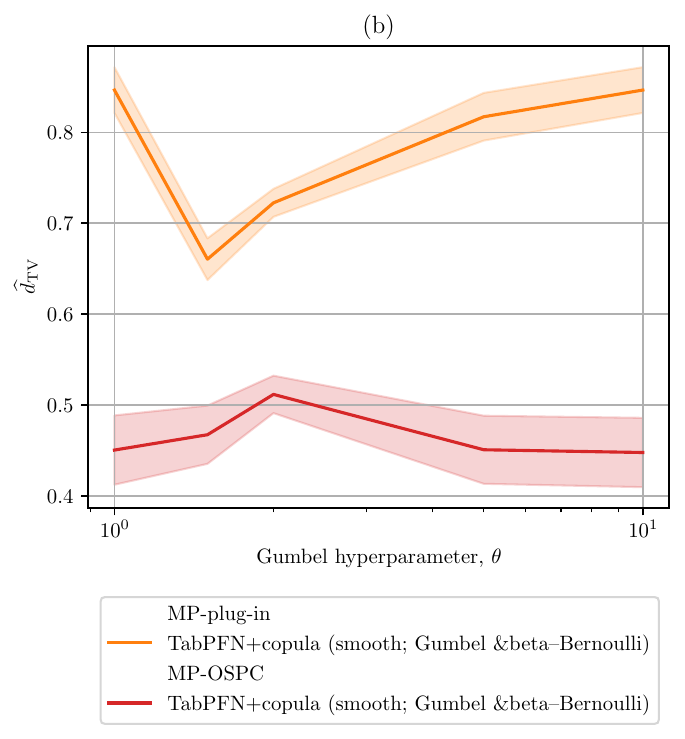}
    \vspace{-0.1cm}
    \caption{{Quality of the asymptotic uncertainty }for Bayesian ATE estimators based on the synthetic data with varying copula types, (a) Gaussian and (b) Gumbel, and corresponding hyperparameters, (a) $\rho \in (0, 1)$ and (b) $\theta \ge 1$  ($n_\text{train} = 500, d_x = 25$, $B = 100, N = 100$). Reported: mean $\hat{d}_\text{TV}$ $\pm$ se over 40 runs (lower is better).}
    \label{fig:ate-synth-copula}
    \vspace{-0.5cm}
\end{figure}

\subsubsection{Additional Synthetic Dataset Results}

Tables~\ref{tab:ate-synth-results-finite-ntrain} and \ref{tab:ate-synth-results-finite-dim-cov} report the (ii)~finite-sample ATE estimation results with varying amounts of train data and covariate dimensionality, respectively. Here, our \MPOSPC always improves over na\"ive or MP-based plug-ins, as long as the data size is $n_{\text{train}} \le 5000$ (which is expected as TabPFN struggles with consistent estimation of the nuisance functions for large $n$, as discovered in Sec.~\ref{sec:conv-check}). Overall, we discovered that \emph{good asymptotic performance of our \MPOSPC also translated into a well-calibrated finite sample uncertainty}.

\begin{table}[H]
    \centering
    \vspace{-0.1cm}
    \scalebox{0.85}{\begin{tabu}{lll|ccccc}
\toprule
 &  & $n_\text{train}$ & 100 & 500 & 1000 & 5000 & 10000 \\
$\mu_a$ model & $\pi$ model & ATE estimator &  &  &  &  &  \\
\midrule
\multirow{4}{*}{CausalPFN ($T$ = 1.0)} & --- & Na\"ive plug-in & 0.965 & 1.000 & 1.000 & 1.000 & 1.000 \\
\cmidrule(lr){2-8}
 & TabPFN-only & MP-OSPC & \textbf{0.685} & 0.485 & 0.585 & 0.890 & 0.980 \\
\cmidrule(lr){2-8}
 & TabPFN+copula (smooth; $\rho$ = 0.25) & MP-OSPC & 0.735 & 0.520 & 0.610 & 0.865 & 0.970 \\
\cmidrule(lr){2-8}
 & TabPFN+copula (smooth; $\rho$ = 0.5) & MP-OSPC & \underline{0.695} & 0.510 & 0.600 & 0.865 & 0.970 \\
\midrule \midrule
CausalPFN ($T$ = 0.5) & --- & Na\"ive plug-in & 1.000 & 1.000 & 1.000 & 1.000 & 1.000 \\
\midrule \midrule
CausalPFN ($T$ = 0.1) & --- & Na\"ive plug-in & 1.000 & 1.000 & 1.000 & 1.000 & 1.000 \\
\midrule \midrule
\multirow{2}{*}{TabPFN-only} & --- & MP-plug-in & 0.875 & 0.855 & 0.755 & 0.560 & \underline{0.305} \\
\cmidrule(lr){2-8}
 & TabPFN-only & MP-OSPC & 0.865 & \textbf{0.340} & \textbf{0.330} & 0.820 & 0.870 \\
\midrule
\multirow{2}{*}{TabPFN+copula (smooth; $\rho$ = 0.25)} & --- & MP-plug-in & 0.920 & 0.680 & 0.690 & \underline{0.455} & \textbf{0.255} \\
\cmidrule(lr){2-8}
 & TabPFN+copula (smooth; $\rho$ = 0.25) & MP-OSPC & 0.895 & 0.355 & 0.370 & 0.810 & 0.845 \\
\midrule
\multirow{2}{*}{TabPFN+copula (smooth; $\rho$ = 0.5)} & --- & MP-plug-in & 0.920 & 0.590 & 0.550 & \textbf{0.390} & 0.335 \\
\cmidrule(lr){2-8}
 & TabPFN+copula (smooth; $\rho$ = 0.5) & MP-OSPC & 0.860 & \underline{0.350} & \underline{0.340} & 0.825 & 0.830 \\
\bottomrule
\multicolumn{6}{l}{Lower $=$ better (best in \textbf{bold}, second best \underline{underlined}) }
\end{tabu}
}
    \vspace{0.1cm}
    \caption{Quality of the (ii) finite-sample uncertainty for Bayesian ATE estimators based on the synthetic data with (a) varying size of the train data, $n_\text{train}$. Reported: mean $\widehat{d}_{\text{KS}}$ evaluated with 40 runs, grouped wrt. the underlying PFN. Here: $d_x=25$.}
    \vspace{-0.4cm}
    \label{tab:ate-synth-results-finite-ntrain}
\end{table}

\begin{table}[H]
    \centering
    \vspace{-0.2cm}
    \scalebox{0.85}{\begin{tabu}{lll|cccc}
\toprule
 &  & $d_x$ & 15 & 25 & 50 & 100 \\
$\mu_a$ model & $\pi$ model & ATE estimator &  &  &  &  \\
\midrule
\multirow{4}{*}{CausalPFN ($T$ = 1.0)} & --- & Na\"ive plug-in & 0.990 & 1.000 & 1.000 & 1.000 \\
\cmidrule(lr){2-7}
 & TabPFN-only & MP-OSPC & \underline{0.500} & 0.485 & 0.560 & 0.575 \\
\cmidrule(lr){2-7}
 & TabPFN+copula (smooth; $\rho$ = 0.25) & MP-OSPC & 0.540 & 0.520 & 0.575 & 0.585 \\
\cmidrule(lr){2-7}
 & TabPFN+copula (smooth; $\rho$ = 0.5) & MP-OSPC & 0.555 & 0.510 & 0.560 & 0.585 \\
\midrule \midrule
CausalPFN ($T$ = 0.5) & --- & Na\"ive plug-in & 1.000 & 1.000 & 1.000 & 1.000 \\
\midrule \midrule
CausalPFN ($T$ = 0.1) & --- & Na\"ive plug-in & 1.000 & 1.000 & 1.000 & 1.000 \\
\midrule \midrule
\multirow{2}{*}{TabPFN-only} & --- & MP-plug-in & 0.955 & 0.855 & 0.880 & 0.690 \\
\cmidrule(lr){2-7}
 & TabPFN-only & MP-OSPC & \textbf{0.495} & \textbf{0.340} & 0.540 & \textbf{0.345} \\
\midrule
\multirow{2}{*}{TabPFN+copula (smooth; $\rho$ = 0.25)} & --- & MP-plug-in & 0.760 & 0.680 & 0.820 & 0.695 \\
\cmidrule(lr){2-7}
 & TabPFN+copula (smooth; $\rho$ = 0.25) & MP-OSPC & 0.570 & 0.355 & \underline{0.535} & \underline{0.375} \\
\midrule
\multirow{2}{*}{TabPFN+copula (smooth; $\rho$ = 0.5)} & --- & MP-plug-in & 0.685 & 0.590 & 0.835 & 0.745 \\
\cmidrule(lr){2-7}
 & TabPFN+copula (smooth; $\rho$ = 0.5) & MP-OSPC & 0.525 & \underline{0.350} & \textbf{0.525} & \underline{0.375} \\
\bottomrule
\multicolumn{5}{l}{Lower $=$ better (best in \textbf{bold}, second best \underline{underlined}) }
\end{tabu}
}
    \vspace{0.1cm}
    \caption{Quality of the (ii) finite-sample uncertainty for Bayesian ATE estimators based on the synthetic data with (b) varying dimensionality of covariates, $d_x$. Reported: mean $\widehat{d}_{\text{KS}}$ evaluated with 40 runs, grouped wrt. the underlying PFN. Here: $n_{\text{train}} =500$.}
    \label{tab:ate-synth-results-finite-dim-cov}
    \vspace{-0.5cm}
\end{table}

\subsubsection{Additional IHDP Results}

In Table~\ref{tab:ate-ihdp-results}, we demonstrate the results in settings (i)--(ii) for the IHDP dataset. The IHDP dataset is known to have low overlap issues, and, thus, (i) asymptotic properties of the debiased estimators cannot be guaranteed (including our \MPOSPC). Still, \emph{our \MPOSPC in combination with TabPFN achieves very good finite-sample calibration.}

\begin{table}[H]
    \centering
    \vspace{-0.1cm}
    \scalebox{0.85}{\begin{tabu}{lll|c|c}
\toprule
 &  &  & (i) $\widehat{d}_{\text{TV}}$ & (ii) $\widehat{d}_{\text{KS}}$ \\
$\mu_a$ model & $\pi$ model & ATE estimator &  &  \\
\midrule
\multirow{4}{*}{CausalPFN ($T$ = 1.0)} & --- & Na\"ive plug-in & \textbf{0.239 $\pm$ 0.014} & 0.14 \\
\cmidrule(lr){2-5}
 & TabPFN-only & MP-OSPC & 0.319 $\pm$ 0.015 & 0.09 \\
\cmidrule(lr){2-5}
 & TabPFN+copula (smooth; $\rho$ = 0.25) & MP-OSPC & 0.303 $\pm$ 0.013 & 0.13 \\
\cmidrule(lr){2-5}
 & TabPFN+copula (smooth; $\rho$ = 0.5) & MP-OSPC & 0.311 $\pm$ 0.013 & 0.14 \\
\midrule \midrule
CausalPFN ($T$ = 0.5) & --- & Na\"ive plug-in & \underline{0.271 $\pm$ 0.015} & 0.10 \\
\midrule \midrule
CausalPFN ($T$ = 0.1) & --- & Na\"ive plug-in & 0.359 $\pm$ 0.017 & \underline{0.06} \\
\midrule \midrule
\multirow{2}{*}{TabPFN-only} & --- & MP-plug-in & 0.332 $\pm$ 0.019 & 0.08 \\
\cmidrule(lr){2-5}
 & TabPFN-only & MP-OSPC & 0.430 $\pm$ 0.021 & 0.09 \\
\midrule
\multirow{2}{*}{TabPFN+copula (smooth; $\rho$ = 0.25)} & --- & MP-plug-in & 0.279 $\pm$ 0.014 & 0.08 \\
\cmidrule(lr){2-5}
 & TabPFN+copula (smooth; $\rho$ = 0.25) & MP-OSPC & \underline{0.271 $\pm$ 0.014} & \underline{0.06} \\
\midrule
\multirow{2}{*}{TabPFN+copula (smooth; $\rho$ = 0.5)} & --- & MP-plug-in & 0.281 $\pm$ 0.014 & \textbf{0.04} \\
\cmidrule(lr){2-5}
 & TabPFN+copula (smooth; $\rho$ = 0.5) & MP-OSPC & 0.314 $\pm$ 0.014 & 0.07 \\
\bottomrule
\multicolumn{3}{l}{Lower $=$ better (best in \textbf{bold}, second best \underline{underlined}) }
\end{tabu}
}
    \vspace{0.1cm}
    \caption{Quality of the (i) asymptotic and (ii) finite-sample uncertainties of ATE estimators based on the IHDP dataset. Reported: (i) mean $\widehat{d}_{\text{TV}}$ $\pm$ se over 100 splits, and (ii) mean $\widehat{d}_{\text{KS}}$ evaluated with 100 splits. Both (i) and (ii) are grouped wrt. the underlying PFN.}
    \label{tab:ate-ihdp-results}
    \vspace{-0.5cm}
\end{table}

\subsubsection{Additional ACIC 2016 Results}

Fig.~\ref{fig:acic-2016-detailed-tabpfn} (TabPFN-based estimators) and Fig.~\ref{fig:acic-2016-detailed-causalpfn} (CausalPFN-based/combined estimators) show detailed (i)~asymptotic and (ii)~finite-sample ATE estimation results for each of 77 datasets from the ACIC 2016 datasets collection. Therein, we additionally provided information on the absolute degree of observed confounding for each dataset $\abs{\Delta}$ and the median inverse propensity score. We observed that, for the majority of the datasets, $\abs{\Delta}$ is below 0.25, which coincides with the priors for TabPFN/CausalPFN (see Fig.~\ref{fig:prior-bias}). Yet, \emph{for datasets with $\abs{\Delta} \gtrsim 0.25$, our \MPOSPC posterior almost always improved over na{\"i}ve/MP-based plugins for both CausalPFN and TabPFN}.

\begin{figure}[ht]
    \centering
    \vspace{-0.2cm}
    \includegraphics[width=\linewidth]{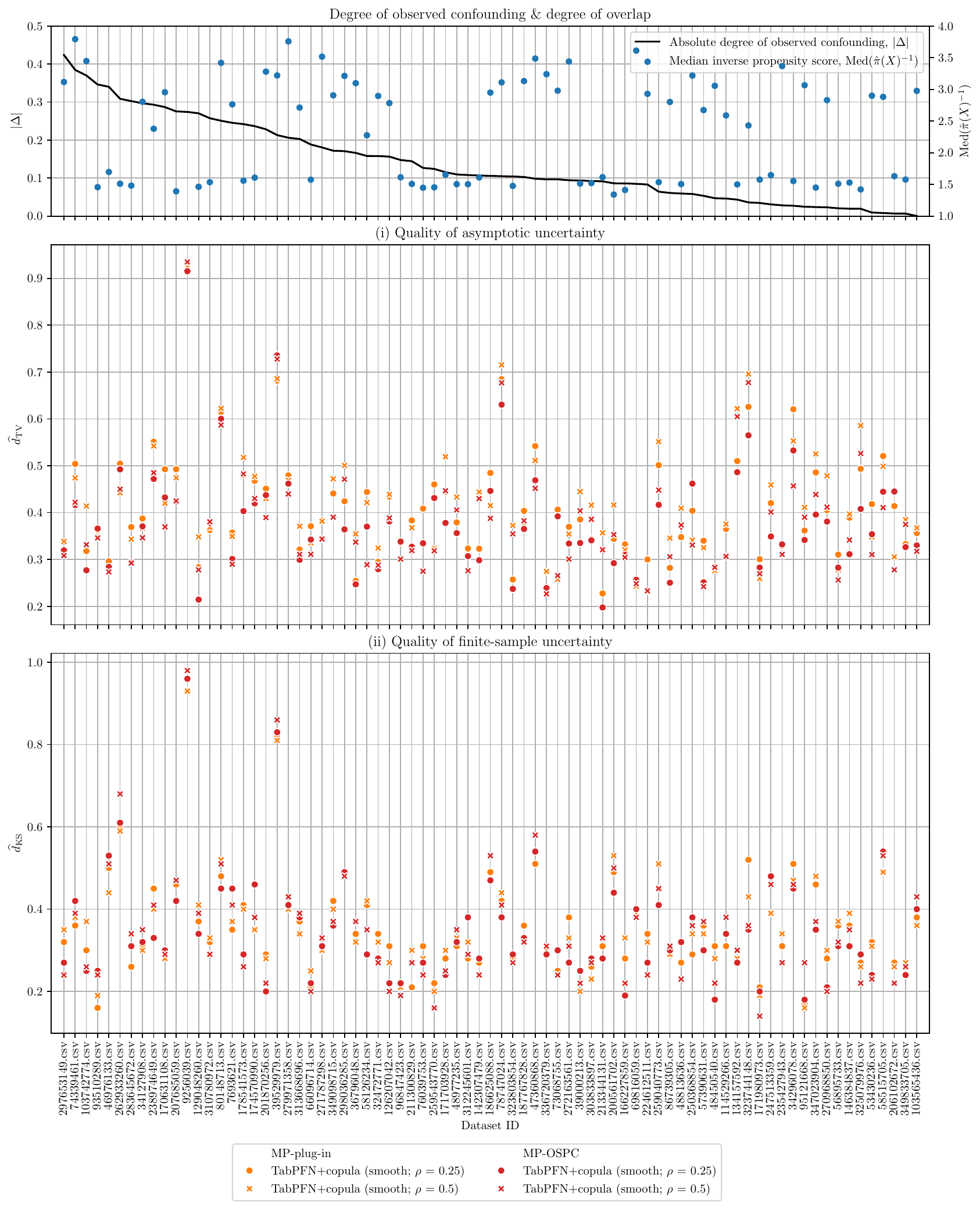}
    \vspace{-0.6cm}
    \caption{Full (i) asymptotic and (ii) finite-sample results of ATE estimation with TabPFN-based estimators for 77 semi-synthetic ACIC 2016 datasets. Datasets are sorted wrt. the absolute degree of measured confounding, $\abs{\Delta}$; and the median inverse propensity score is provided for reference. Reported: (i) mean $\hat{d}_\text{TV}$ and (ii) $\hat{d}_\text{KS}$ over 10 runs (lower is better).}
    \label{fig:acic-2016-detailed-tabpfn}
\end{figure}

\begin{figure}[ht]
    \centering
    \vspace{-0.2cm}
    \includegraphics[width=\linewidth]{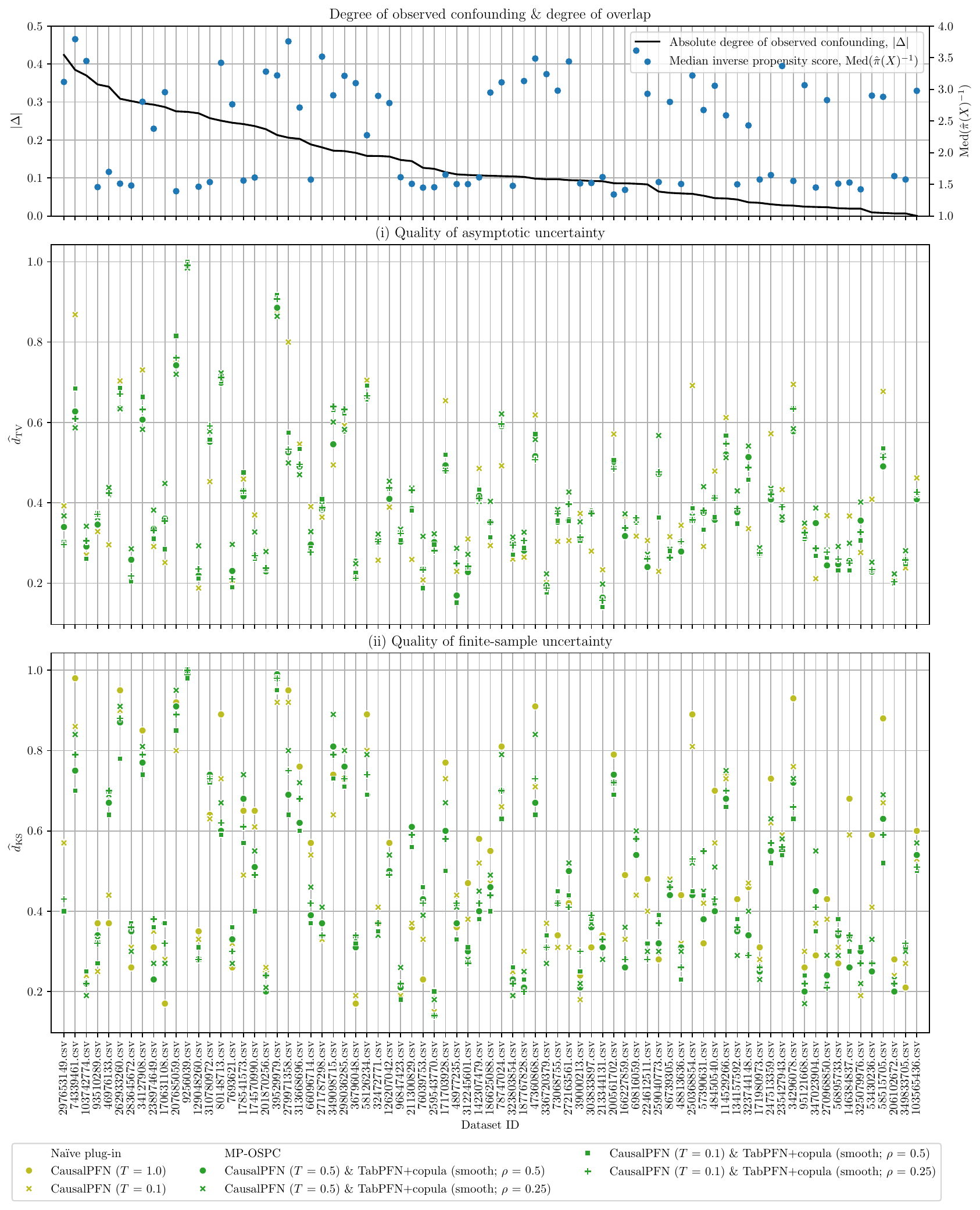}
    \vspace{-0.6cm}
    \caption{Full (i) asymptotic and (ii) finite-sample results of ATE estimation with CausalPFN-based/combined estimators for 77 semi-synthetic ACIC 2016 datasets. Datasets are sorted wrt. the absolute degree of measured confounding, $\abs{\Delta}$; and the median inverse propensity score is provided for reference. Reported: (i) mean $\hat{d}_\text{TV}$ and (ii) $\hat{d}_\text{KS}$ over 10 runs (lower is better).}
    \label{fig:acic-2016-detailed-causalpfn}
\end{figure}

\subsection{Memory Usage \& Runtime}

We report runtime and memory usage for synthetic datasets of increasing size in Fig.~\ref{fig:runtime-mem}, also varying $B$ and $N$. There, TabPFN-only MPs are very GPU-memory/runtime intensive for an increasing number of posterior draws $B$ (as every draw has to be parallelized and requires $N$ subsequent TabPFN calls each with up to $O((n+N)^2)$ time complexity). At the same time, the copula-based MPs are more GPU-memory/runtime effective (as they require one initial TabPFN call with $O(n^2)$ time complexity and all the MP inference for $B$ draws can be easily vectorized).

\begin{figure}[H]
    \vspace{-0.25cm}
    \centering
    \includegraphics[width=0.8\linewidth]{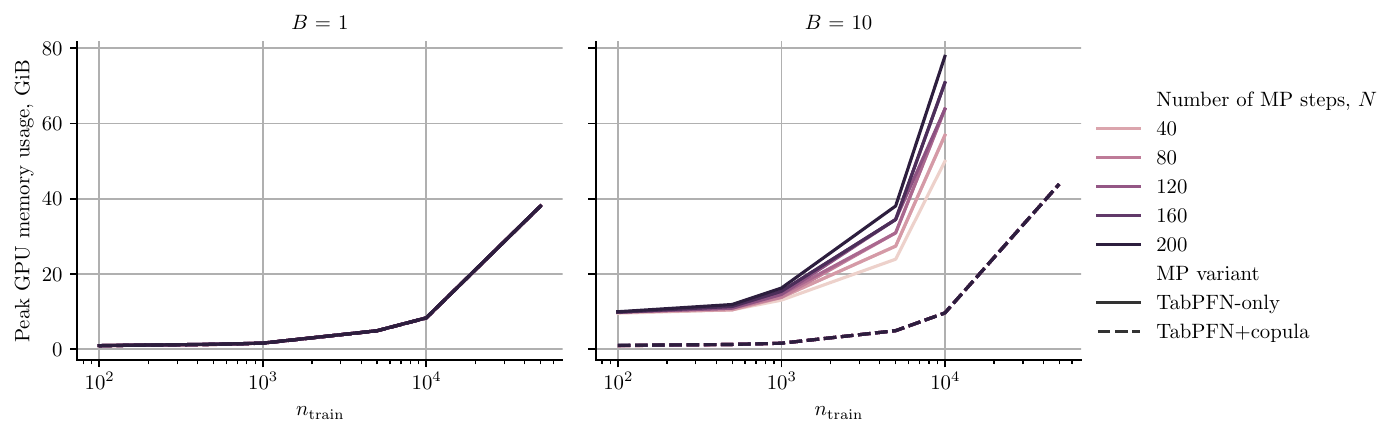} \\
    \includegraphics[width=0.8\linewidth]{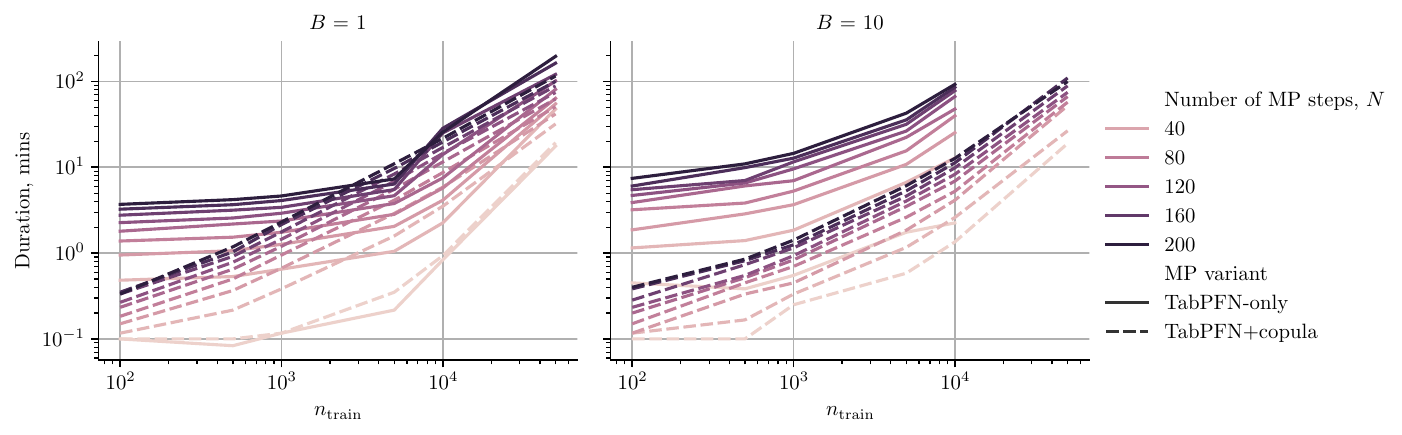}
    \vspace{-0.3cm}
    \caption{Peak GPU memory usage (top figure, in gigibytes (GiB)) and total runtime (bottom figure, in minutes) of ATE experiments for our \MPOSPC with different MP variants and varying number of MP steps based on the synthetic data with varying size of the train data, $n_\text{train}$ (here $d_x = 25$).  Experiments are carried out on AMD EPYC 9455 48-Core CPU and 1 H200 Nvidia GPU. }
    \label{fig:runtime-mem}
    \vspace{-0.5cm}
\end{figure}

\clearpage
\section{Real-world Case Study} \label{app:case-study}

In the following, we provide a case study, where we apply our \MPOSPC to a real-world problem. Here, we want to study the effectiveness of lockdowns during the COVID-19 pandemic by using the observational data collected in the first half of 2020 \citep{banholzer2021estimating}. Specifically, we aim to estimate the average effect (ATE) of a strict lockdown on the incidence.

\subsection{Dataset} 
We use the multi-country dataset from \citet{banholzer2021estimating} and the pre-processing pipeline similar to \citet{melnychuk2024quantifying}.\footnote{The data is available at \url{https://github.com/nbanho/npi_effectiveness_first_wave/blob/master/data/data_preprocessed.csv}.}
The outcome $Y \in [-7,0]$ is the weekly relative case growth on the log scale, defined as the ratio of new cases to cumulative cases. The treatment $A \in \{0,1\}$ indicates whether a strict lockdown was implemented one week earlier. For the pre-treatment covariates $X$, we select three variables ($d_x = 4$): the relative case growth in the previous week, the relative case growth two weeks earlier, the relative case growth three weeks earlier, and the strict-lockdown indicator from two weeks earlier. We assume the observations are i.i.d. and that the causal assumptions (1)--(3) hold. We exclude observations with fewer than 20 cumulative cases. This leaves a total sample size of $n = n_0 + n_1 = 152 + 112$, corresponding to treated and untreated observations, respectively.

\subsection{Results} 
We show the results of ATE estimation in Fig.~\ref{fig:rw-ate}. Therein, we compared the uncertainties of frequentist and Bayesian ATE estimators based on different PFNs. Interestingly, while almost all estimators predict the negative ATE with high certainty/credibility (meaning a strict lockdown should reduce the incidence), the estimated ATEs differ among different PFNs:  the location and spread vary for (a) TabPFN-based and (b) CausalPFN-based/combined estimators. \emph{Importantly, the variants of our \MPOSPC Bayesian ATE estimator \emph{match} the frequentist A-IPTW estimators the best, suggesting the best alignment between the two classes of estimators.}  

\begin{figure}[H]
    \vspace{-0.3cm}
    \includegraphics[width=0.82\linewidth]{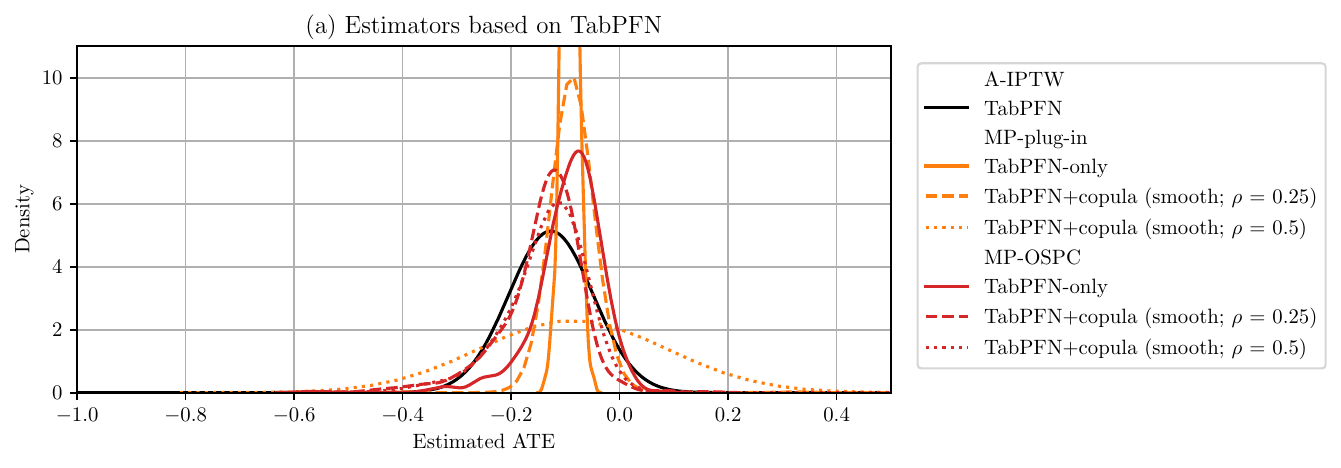}
    \vspace{-0.1cm}
    \includegraphics[width=0.95\linewidth]{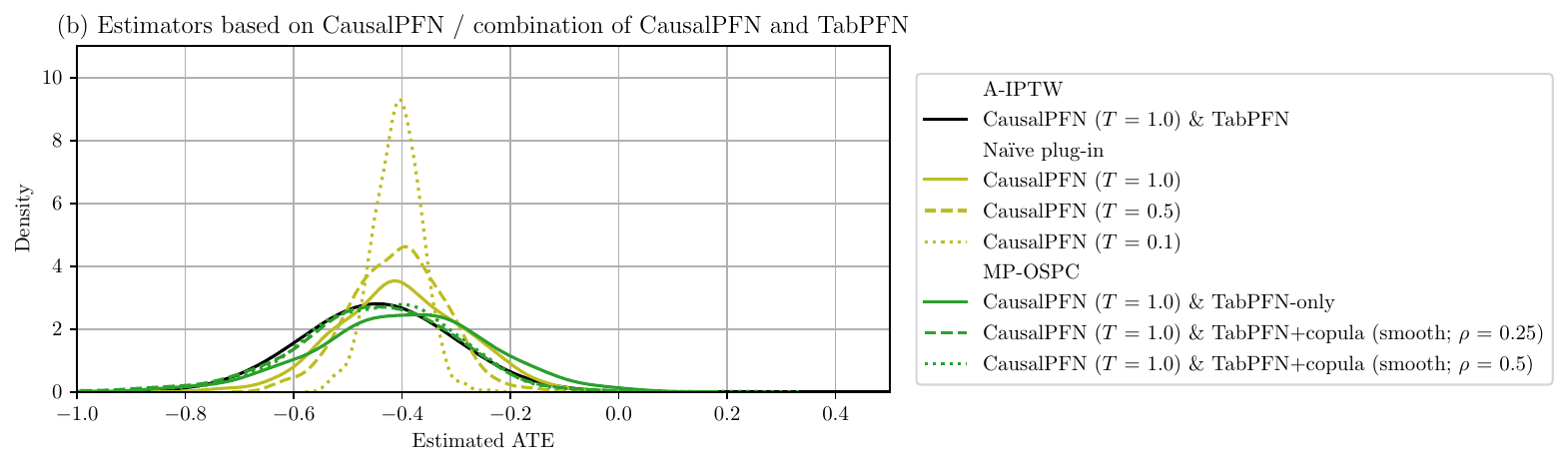}
    \vspace{-0.5cm}
    \caption{Uncertainty of frequentist and Bayesian ATE estimators based on different PFNs: (a)~TabPFN-based and (b)~CausalPFN-based/combined estimators. Reported: (i)~density of an A-IPTW-based asymptotic normal distribution for frequentist estimators (in black); and (ii)~posterior densities for Bayesian estimators (in colors). Both (i) and (ii) are based on 5-fold cross-fitting. }
    \label{fig:rw-ate}
\end{figure}

\end{document}